\DeclareMathOperator*{\argmin}{arg\,min}
\theoremstyle{plain}
\newtheorem{theorem}{Theorem}[section]
\newtheorem{lemma}[theorem]{Lemma}
\theoremstyle{definition}
\theoremstyle{remark}
\newtheorem{remark}[theorem]{Remark}
\title{Optimal Anytime Algorithms for Online Convex Optimization with Adversarial Constraints}
\author{
 Dhruv Sarkar
 \\Department of Computer Science and Engineering,\\
  Indian Institute of Technology, Kharagpur \\
  \And
  Abhishek Sinha
  \\School of Technology and Computer Science,
  \\Tata Institute of Fundamental Research, Mumbai
}
\begin{document}
\maketitle
\begin{abstract}
      We propose an anytime online algorithm for the problem of learning a sequence of adversarial convex cost functions while approximately satisfying another sequence of adversarial online convex constraints. A sequential algorithm is called \emph{anytime} if it provides a non-trivial performance guarantee for any intermediate timestep $t$ without requiring prior knowledge of the length of the entire time horizon $T$. Our proposed algorithm achieves optimal performance bounds without resorting to the standard doubling trick, which has poor practical performance due to multiple restarts. Our core technical contribution is the use of time-varying Lyapunov functions to keep track of constraint violations. This must be contrasted with prior works that used a fixed Lyapunov function tuned to the known horizon length $T$. The use of time-varying Lyapunov function poses unique analytical challenges as properties, such as \emph{monotonicity}, on which the prior proofs rest, no longer hold. By introducing a new analytical technique, we show that our algorithm achieves $O(\sqrt{t})$ regret and $\tilde{O}(\sqrt{t})$ cumulative constraint violation bounds for any $t\geq 1$.

We extend our results to the dynamic regret setting, achieving bounds that adapt to the path length of the comparator sequence without prior knowledge of its total length. We also present an adaptive algorithm in the optimistic setting, whose performance gracefully scales with the cumulative prediction error. We demonstrate the practical utility of our algorithm through numerical experiments involving the online shortest path problem.
\end{abstract}
\section{Introduction}
We consider a natural generalization of the standard Online Convex Optimization (OCO) problem, called \underline{C}onstrained \underline{O}nline \underline{C}onvex \underline{O}ptimization (COCO), where in addition to a convex cost function, a convex constraint function is revealed to the learner after it chooses its action at each round for a horizon of $T$ rounds. 
The goal is to design an online algorithm which achieves 
 a small regret (w.r.t. the cost functions) while ensuring a small cumulative constraint violation or CCV (w.r.t. the constraint functions) for all rounds $t \geq 1$. 
 The COCO problem arises in many practical settings, including safety-aware contextual bandits \citep{pmlr-v70-sun17a}, autonomous driving \citep{gao2024constraints}, budget-constrained bandits \citep{immorlica2022adversarial}, and learning safety constraints in LLMs \citep{chen2025learning}.

The COCO problem has been extensively studied in the literature over the past decade \citep{guo2022online, yi2023distributed, neely2017online}. Recent work by \citet{sinha2024optimal} introduced a simple algorithm which yields $O(\sqrt{T})$ regret and $\Tilde{O}(\sqrt{T})$ CCV guarantees, which are also shown to be tight when the horizon length $T$ is known. These results are established by leveraging a Lyapunov function-based technique that constructs a surrogate cost function by linearly combining the cost and constraint functions at each round, and then passes it on to a standard OCO algorithm. The analysis is also refreshingly elegant in stark contrast to prior works that rely on primal-dual-based approaches and achieve only suboptimal guarantees \citep{guo2022online, yi2021regret, yi2023distributed, pmlr-v70-sun17a}.

However, a major drawback of their result is that their proposed algorithm is \emph{not} anytime as it needs to know the length of the horizon $T$ \emph{a priori}. In particular, although their algorithm achieves $O(\sqrt{t})$ regret for any $t \geq 1,$ the CCV guaranteed by their algorithm is not time-uniform as it only achieves a CCV of $\Tilde{O}(\sqrt{T})$ for any time $t \geq 1$, however small \citep[Theorem 1]{sinha2024optimal}. This raises a natural and important question - is it possible to design a COCO algorithm which simultaneously achieves $O(\sqrt{t})$ regret and $\tilde{O}(\sqrt{t})$ CCV for any $t \geq 1?$

In this paper, we affirmatively answer this question by introducing a sequence of time-varying Lyapunov functions $\{\Phi_t(\cdot)\}_{t \geq 1},$ which replace the fixed, horizon-dependent Lyapunov function $\Phi(\cdot)$ employed by \citet{sinha2024optimal}. Their analysis crucially utilized the monotone \emph{non-decreasing} property of the $\{\Phi(Q(t))\}_{t \geq 1}$ sequence, where $Q(t)$ is the CCV up to round $t$. Unfortunately, it turns out that with the new Lyapunov functions, $\Phi_t(x)$ is actually \emph{monotonically decreasing} pointwise in $t$, and hence, the previous arguments do not apply. The key technical novelty in our analysis is in ensuring that the sequence $\{\Phi_t(Q(t))\}_{t\geq 1}$ remains monotonically non-decreasing even though $\Phi_t(x) < \Phi_{t-1}(x), \forall x$. This is achieved by defining $Q(t)$ to be a suitable upper bound to the current CCV while adequately exploiting the exponential form of the Lyapunov functions (see Eqn.\ \eqref{new_q_recursion}). 

In summary, we make the following contributions in the paper:
\begin{itemize}
    \item We propose a simple anytime algorithm for COCO that achieves $O(\sqrt{t})$ regret and $\tilde{O}(\sqrt{t})$ CCV for any $t \geq 1.$ Our algorithm does not need multiple restarts and completely avoids the impractical doubling trick (see section \ref{sec:doubling_trick} for more details).

    \item The above results are obtained by introducing a sequence of time-varying Lyapunov functions in both our algorithms and their analysis. Compared to the prior works, which use time-invariant Lyapunov functions, time-varying Lyapunov functions pose non-trivial technical challenges, as the key monotonicity property is lost. We address this challenge by working with a suitable upper bound to CCV and appropriately tuning the parameters of the Lyapunov functions. 

    \item We show that our techniques extend naturally to the dynamic regret and optimistic settings. In particular, we design algorithms with performance bounds that adapt to unknown parameters, such as the total path length $\mathcal{P}_T$ or the total prediction error $\mathcal{E}_T,$ without resorting to the practically unwieldy doubling trick.
    
    \item Finally, we demonstrate the superiority of our algorithm through a number of numerical experiments.
\end{itemize}
\section{Related Works}
Online learning with long-term constraints was first studied by ~\citet{mannor2009online}. In the context of two-player infinite-horizon stochastic games, they established a fundamental impossibility result: it is not possible to simultaneously achieve sublinear bounds for both regret as well as cumulative constraint violation (CCV) against the best fixed offline action that satisfies the long-term constraint over the entire horizon.
This negative result motivated subsequent works to consider a weaker benchmark, in which the benchmark satisfies the constraints at \emph{every} round \citep{mahdavi2012trading, neely2017online, guo2022online}. The goal in this line of work is to obtain the tightest regret and CCV bounds under this assumption. For time-invariant constraints, \citet{mahdavi2012trading} were the first to use Online Gradient Descent (OGD) and mirror prox-based online policies to obtain sublinear regret and sublinear CCV guarantees. Later, \citet{castiglioni2022unifying} proposed a unified meta-algorithm that achieves $\mathcal{O}(T^{\nicefrac{3}{4}})$ bounds for both approximate regret and CCV in the non-convex setting with long-term constraints. However, they made an additional \emph{Slater's condition} (strict feasibility) assumption and the assumption that the constraint functions change no faster than $O(T^{-\nicefrac{1}{4}})$. \citet{guo2022online} studied the problem with adversarial constraints, and achieved $O(\sqrt{T})$ regret and $O(T^{\nicefrac{3}{4}})$ CCV in the general setting without assuming Slater's condition. The closest to our work is the paper by  \citet{sinha2024optimal},  which achieves optimal $O(\sqrt{T})$ regret and $\tilde{O}(\sqrt{T})$ CCV bounds by reducing the problem to the standard online convex optimization on a sequence of surrogate cost functions. However, they assumed apriori knowledge of the horizon $T$ and also their violation bound was suboptimal in the sense that even for an intermediate timestep $t$, they could only establish $\tilde{O}(\sqrt{T})$ bound as opposed to the $\tilde{O}(\sqrt{t})$ that we present. This was primarily because the definition of the surrogate costs required the knowledge of the horizon length $T$. Follow-up works, such as \citet{lekeufack2025optimisticalgorithmonlineconvex}, which generalized \citet{sinha2024optimal}'s algorithm to the optimistic and dynamic regret setting, required the knowledge of other problem-dependent parameters, such as path length or error bounds, which prevented their algorithms from being truly adaptive. All of the above papers relied on the doubling trick to estimate these problem dependent parameters. However, as noted in a number of prior works,  the doubling trick is impractical and wasteful as it relies on restarting the algorithm from scratch every time the problem dependent parameter exceeds the estimate \citep{zhang2024improving, kwon2014continuous, luo2014towards}. Consequently, the data from the previous learning phases are not utilized for prediction in the subsequent phase, which leads to unsatisfactory practical performance.
We have discussed the doubling trick in more detail in Section \ref{sec:doubling_trick} in the Appendix.
\section{Problem formulation} 
We consider a repeated game between an online player and an adaptive adversary. In this game, on each round $t\geq 1,$ the player chooses a feasible action $x_t$ from a decision set $\mathcal{X}$. The set $\mathcal{X}$ is assumed to be non-empty, closed, and convex with a finite Euclidean diameter of $D.$ Upon observing the current action $x_t,$ the adversary chooses two convex and $G$-Lipschitz functions - a \emph{cost} function $f_t: \mathcal{X} \to \mathbb{R}$ and a \emph{constraint} function $g_t: \mathcal{X} \to \mathbb{R}.$ The constraint function $g_t$ corresponds to an online constraint of the form $g_t(x)\leq 0.$ Since the constraints are adaptively chosen, they cannot be expected to be satisfied by the player every round. Consequently, the player incurs a cost of $f_t(x_t)$ and an instantaneous constraint violation of $\max(0, g_t(x_t)).$ Our objective is to design an online policy that achieves a small cumulative cost while incurring a small CCV for each $t \geq 1$. 
The (static) regret of any policy is computed by comparing its cumulative cost against that of a fixed feasible action $x^\star \in \mathcal{X}$ that satisfies all constraints on each round. Specifically, let $\mathcal{X}^\star \subseteq \mathcal{X}$ be the set consisting of all actions satisfying all constraints:
\begin{eqnarray} \label{feas-set}
	\mathcal{X}^\star = \{x \in \mathcal{X}: g_t(x) \leq 0, ~\forall t \geq 1\}.
\end{eqnarray} 
We assume that the feasible set is non-empty, \emph{i.e.,} $\mathcal{X}^\star \neq \emptyset.$
We then define the Regret and the CCV of any policy as 
\begin{eqnarray}
	\textrm{Regret}_T &=& \sup_{x^\star \in \mathcal{X}^\star} \sum_{t=1}^T\big(f_t(x_t)-f_t(x^\star)\big), \label{reg-def}\\
	\textrm{CCV}_T&=& \sum_{t=1}^T (g_t(x_t))^+, \label{ccv-def}
\end{eqnarray}
where we have defined $(y)^+\equiv \max(0,y), ~ y \in \mathbb{R}.$ 

In the standard OCO problem, only the cost functions are revealed on every round and there is no online constraint function. The goal for the standard OCO problem is to only minimize the regret.
Hence, the standard OCO problem can be seen to be a special case of COCO where $g_t=0, \forall t,$ and hence, $\mathcal{X}^\star=\mathcal{X}$.

\section{Anytime bounds on Regret and Cumulative Constraint Violation}
\label{tighter_bds}

\subsection{Technical Overview}
Our approach builds upon the Lyapunov-based framework recently proposed by \citet{sinha2024optimal}. They reduce the COCO problem to an instance of the standard OCO problem on a sequence of surrogate cost functions that linearly combine the cost and constraint functions at each round. The coefficients of the linear combinations are chosen according to a fixed Lyapunov function $\Phi(x):= e^{\lambda x}-1,$ where the parameter $\lambda$ depends on the length of the time horizon $T.$  Hence, their algorithm is not anytime and the resulting bounds are not time-uniform. To overcome this, we use time-varying Lyapunov functions, $\Phi_t(x) = e^{\lambda_t x} - 1$, where the parameter $\lambda_t$ is a decreasing function of the current time step $t$. 

Although the time-varying Lyapunov functions obviates the need to know the horizon length $T$, their analysis presents a significant challenge: the sequence $\Phi_t(Q(t))$ is no longer guaranteed to be monotonically non-decreasing, a property that was critical to the analysis in \citet{sinha2024optimal}. Our key technical novelty is to restore this monotonicity by redefining the variables $Q(t)$. Instead of the standard additive update, which leads to an exact expression for CCV, we introduce a multiplicative factor, defining  $Q(t) = \frac{\lambda_{t-1}}{\lambda_t}Q(t-1) + \tilde{g}_t(x_t)$. Since $\lambda_t$ is decreasing, the factor $\frac{\lambda_{t-1}}{\lambda_t} \ge 1$ precisely compensates for the shrinking Lyapunov function, ensuring that $\Phi_t(Q(t)) \ge \Phi_{t-1}(Q(t-1)).$ Furthermore, the new definition also ensures that $Q(t)$ remains an upper bound to the CCV for any $t \geq 1.$

Finally, we construct a surrogate cost function $\hat{f}_t$ based on the time-varying Lyapunov functions, which are then passed to a standard adaptive OCO algorithm like AdaGrad. This procedure allows us to derive the desired anytime bound of $O(\sqrt{t})$ for regret and $\tilde{O}(\sqrt{t})$ for cumulative constraint violation. 
\subsection{Preliminaries: Lipschitz-Adaptive OCO Algorithms}
\begin{algorithm}
\caption{\textsf{AdaGrad}: Online Gradient Descent with Adaptive step sizes} \label{ogd_alg}
\begin{algorithmic}[1]
    \STATE \textbf{Input} : Convex decision set $\mathcal{X}$, sequence of convex cost functions $\{\hat{f}_t\}_{t=1}^{T}$, sequence of learning rates $\{\eta_t\}_{t=1}^{T}$, $\textrm{diam}(\mathcal{X}) = D,$  $\mathcal{P}_{\mathcal{X}}(\cdot)=$ Euclidean projection operator on the convex set $\mathcal{X}$.
    \STATE \textbf{Initialize} : $x_1 \in \mathcal{X}$ arbitrarily.
    \FOR{$t=1:T$}
        \STATE Play $x_t$ and compute $\nabla_t = \nabla \hat{f}_t(x_t)$
        \STATE Set $x_{t+1} = \mathcal{P}_{\mathcal{X}}(x_t - \eta_t \nabla_t)$ 
    \ENDFOR
    \end{algorithmic}
\end{algorithm}
In this Section, we briefly recall the family of first-order methods (\emph{a.k.a.} Projected Online Gradient Descent (OGD)) for the standard OCO problem, which will be used as a subroutine in our proposed COCO algorithm. These methods differ from each other in the way the step sizes are chosen. For a sequence of convex cost functions $\{\hat{f}_t\}_{t \geq 1},$ a projected OGD algorithm selects the successive actions as \citep[Algorithm 2.1]{orabona2019modern}:
\begin{eqnarray}\label{ogd-policy} 
	x_{t+1} = \mathcal{P}_\mathcal{X}(x_t - \eta_t \nabla_t), ~~ \forall t\geq 1,
\end{eqnarray}
where $\nabla_t \equiv \nabla \hat{f}_t(x_t)$ is a subgradient of the function $\hat{f}_t$ at $x_t$, $\mathcal{P}_\mathcal{X}(\cdot)$ is the Euclidean projection operator on the set $\mathcal{X}$ and $\{\eta_t\}_{t \geq 1}$ is an adaptive step size sequence. 
The (diagonal version of the) \textsf{AdaGrad} policy adaptively chooses the step size sequence as a function of the norm of the previous subgradients as  $\eta_t= \frac{\sqrt{2}D}{2\sqrt{\sum_{\tau=1}^{t} G_\tau^2}},$ where $G_t=||\nabla_t||_2, t \geq 1$ \citep{duchi2011adaptive}. \footnote{We set $\eta_t=0$ if $G_t=0.$} Importantly, the \textsf{AdaGrad} policy does not need to know the horizon length $T$ or a uniform upper bound to the Lipschitz constants of the cost functions. This policy enjoys the following adaptive regret bound.
\begin{theorem}{\citep[Theorem 4.14]{orabona2019modern}}  The AdaGrad policy, with the given step size sequence, achieves the following regret bound for the standard OCO problem: 
	\begin{eqnarray} \label{cvx-reg-bd}
			 \textrm{Regret}_T \leq \sqrt{2}D \sqrt{\sum_{t=1}^T G_t^2}.
	\end{eqnarray}
	\end{theorem}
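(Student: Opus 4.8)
The plan is to prove the stated bound by the classical one-step-progress plus (Abel) telescoping argument, now carried out with a non-increasing, data-dependent step size sequence. Fix an arbitrary comparator $x^\star \in \mathcal{X}$ and write $G_t = \|\nabla_t\|_2$, $S_t = \sum_{\tau=1}^t G_\tau^2$, so that $\eta_t = \tfrac{\sqrt 2 D}{2\sqrt{S_t}}$. First I would establish the one-step inequality: using the non-expansiveness of the Euclidean projection $\mathcal{P}_\mathcal{X}$ and expanding the square,
\[
\|x_{t+1}-x^\star\|_2^2 \le \|x_t - \eta_t\nabla_t - x^\star\|_2^2 = \|x_t-x^\star\|_2^2 - 2\eta_t\langle \nabla_t, x_t - x^\star\rangle + \eta_t^2 G_t^2,
\]
which rearranges to $\langle \nabla_t, x_t - x^\star\rangle \le \tfrac{1}{2\eta_t}\big(\|x_t-x^\star\|_2^2 - \|x_{t+1}-x^\star\|_2^2\big) + \tfrac{\eta_t}{2}G_t^2$. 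Summing over $t=1,\dots,T$ and using convexity, $f_t(x_t) - f_t(x^\star) \le \langle \nabla_t, x_t - x^\star\rangle$, gives $\textrm{Regret}_T \le A + B$ where $A = \sum_{t=1}^T \tfrac{1}{2\eta_t}\big(\|x_t-x^\star\|_2^2 - \|x_{t+1}-x^\star\|_2^2\big)$ and $B = \sum_{t=1}^T \tfrac{\eta_t}{2}G_t^2$.

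For $A$, I would regroup the telescoping sum with time-varying coefficients via summation by parts:
\[
A = \frac{1}{2\eta_1}\|x_1-x^\star\|_2^2 + \sum_{t=2}^T\Big(\frac{1}{2\eta_t} - \frac{1}{2\eta_{t-1}}\Big)\|x_t-x^\star\|_2^2 - \frac{1}{2\eta_T}\|x_{T+1}-x^\star\|_2^2.
\]
Since $S_t$ is non-decreasing, $\eta_t$ is non-increasing, so every coefficient $\tfrac{1}{\eta_t} - \tfrac{1}{\eta_{t-1}} \ge 0$; bounding each $\|x_t - x^\star\|_2^2 \le D^2$ by the diameter assumption and discarding the non-positive last term, the coefficients telescope to give $A \le \tfrac{D^2}{2\eta_T} = \tfrac{D^2}{2}\cdot \tfrac{2\sqrt{S_T}}{\sqrt 2 D} = \tfrac{D}{\sqrt 2}\sqrt{S_T}$. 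For $B$, I would substitute the step sizes to get $B = \tfrac{\sqrt 2 D}{4}\sum_{t=1}^T \tfrac{G_t^2}{\sqrt{S_t}}$ and then invoke the elementary lemma that for non-negative reals with partial sums $S_t$ one has $\sum_{t=1}^T \tfrac{S_t - S_{t-1}}{\sqrt{S_t}} \le 2\sqrt{S_T}$, which follows by telescoping the pointwise bound $\tfrac{b-a}{\sqrt b}\le 2(\sqrt b - \sqrt a)$ valid for $0\le a\le b$ (rearrange $b-a = (\sqrt b - \sqrt a)(\sqrt b + \sqrt a) \le 2\sqrt b(\sqrt b - \sqrt a)$). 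This yields $B \le \tfrac{\sqrt 2 D}{4}\cdot 2\sqrt{S_T} = \tfrac{D}{\sqrt 2}\sqrt{S_T}$. Adding, $\textrm{Regret}_T \le A + B \le \sqrt 2 D\sqrt{S_T}$, and since $x^\star$ was arbitrary, the same bound holds for the supremum in the definition of $\textrm{Regret}_T$.

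There is no serious obstacle here; the two places where the adaptive form is genuinely used — the monotonicity of $\eta_t$ in the summation-by-parts step for $A$, and the "sum of ratios" lemma for $B$ — are where I would be most careful in the write-up. The one bookkeeping point is the degenerate round $G_t = 0$, where the footnote sets $\eta_t = 0$: there $x_{t+1} = x_t$, the gradient contributes nothing to the regret and nothing to $B$, and the one-step inequality holds trivially, so such rounds can simply be dropped from both sums (equivalently, restart the indexing at the first round with a nonzero gradient) and the argument proceeds verbatim.
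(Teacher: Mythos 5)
Your proof is correct and is essentially the standard argument for \textsf{AdaGrad} that the paper cites from \citet[Theorem 4.14]{orabona2019modern} (one-step progress from projection non-expansiveness, Abel summation with the non-increasing step sizes to get the $D^2/(2\eta_T)$ term, and the telescoping ratio lemma $\sum_t (S_t - S_{t-1})/\sqrt{S_t} \le 2\sqrt{S_T}$ for the stability term); the paper itself does not reprove it. Your handling of the degenerate $G_t = 0$ rounds is also the right bookkeeping, consistent with the footnote setting $\eta_t=0$ there.
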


\subsection{Design and Analysis of the COCO Algorithm}
 \label{sec:analysis}
To simplify the analysis, we pre-process the cost and constraint functions as follows.

\vspace{5pt}

\hrule
\vspace{5pt}
\textbf{Pre-processing:}
On every round, we clip the negative part of the constraint to zero by passing it through the standard ReLU unit. Next, we scale both the cost and constraint functions by a factor of $\alpha \equiv (2GD)^{-1}.$  More precisely, 
 we define $\tilde{f}_t \gets \alpha f_t, \tilde{g}_t \gets \alpha (g_t)^+.$ Hence, the pre-processed functions are $\alpha G=(2D)^{-1}$-Lipschitz with $\tilde{g}_t \geq 0, \forall t.$  
 \vspace{5pt}

\hrule 
\vspace{5pt}
In the following, we will derive the Regret and CCV bounds for the pre-processed functions. The corresponding bounds for the original functions are obtained upon multiplying the bounds by $\alpha^{-1}$.
\subsubsection{Surrogate cost functions:} 
Let $\{\lambda_t\}_{t \geq 1}$ be a monotonically decreasing sequence which will be specified later.
We define $Q(t)$ using the following recursion:
\begin{eqnarray} \label{new_q_recursion}
Q(t)=\frac{\lambda_{t-1}}{\lambda_t}Q(t-1)+\tilde{g}_t(x_t), ~t\geq 1, 
\end{eqnarray}
with $Q(0)=0$.
Since $\frac{\lambda_{t-1}}{\lambda_t} \geq 1, $ $Q(t)$ is an upper bound to the $\text{CCV}$ up to round $t$. Thus, an upper bound to $Q(t)$ also implies an upper bound to CCV. Consequently, we proceed to control $Q(t).$

Towards this, let $\Phi_t: \mathbb{R}_+ \to \mathbb{R}_+$ be a sequence of non-decreasing convex Lyapunov functions such that $\Phi_0(0)=0.$ Using the convexity of the function $\Phi_t(\cdot),$ we have
%
\begin{align} \label{dr-bd-gen}
	\Phi_t(Q(t))  \leq \Phi_t(\frac{\lambda_{t-1}}{\lambda_t}Q(t-1)) + \Phi'_t(Q(t)) \tilde{g}_t(x_t). 
\end{align}
We now choose the Lyapunov function to be $\Phi_t(x) \stackrel{\textrm{(def)}}{=} e^{\lambda_t x}-1.$ Hence, we have $\Phi_t(\frac{\lambda_{t-1}}{\lambda_t}x) = \Phi_{t-1}(x), \forall x.$ Substituting this in the bound \eqref{dr-bd-gen}, the one-step change (\emph{drift}) of the potential function $\Phi_t(Q(t))$ can be upper bounded as 
\begin{eqnarray} \label{drift_ineq_new}
	\Phi_{t}(Q(t))-\Phi_{t-1}(Q(t-1)) \leq \Phi'_t(Q(t)) \tilde{g}_t(x_t). 
\end{eqnarray}
Recall that, in addition to controlling the CCV, we also want to minimize the cumulative cost (which is equivalent to minimizing the regret for the cost functions). We combine these two objectives into a single objective of minimizing a sequence of surrogate cost functions $\{\hat{f}_t\}_{t=1}^T$ obtained by taking a positive linear combination of the drift upper bound \eqref{drift_ineq_new} and the cost function $f_t$. More precisely, we define the surrogate cost function at round $t$ as
\begin{eqnarray} \label{surrogate_new}
	\hat{f}_t(x):= \tilde{f}_t(x)+ \Phi'_t(Q(t)) \tilde{g}_t(x), ~~ t \geq 1. 
\end{eqnarray}
 Our proposed policy for COCO, described in Algorithm \ref{coco_alg}, simply runs the \textsf{AdaGrad} algorithm on the surrogate cost function sequence  $\{\hat{f}_t\}_{t\geq 1}$, for a specific choice of the parameter sequence $\{\lambda_t\}_{t\geq 1}$ as dictated by the following analysis.
\begin{algorithm}[tb]
   \caption{Anytime Online Policy for COCO}
   \label{coco_alg}
\begin{algorithmic}[1]
   \STATE {\bfseries Input:} Sequence of convex cost functions $\{f_t\}_{t=1}^T$ and constraint functions $\{g_t\}_{t=1}^T,$ an upper bound $G$ to the norm of their (sub)-gradients, Diameter $D$ of the admissible set $\mathcal{X}$
     \STATE {\bfseries Parameters:} $ \lambda_{t}= \frac{1}{4\sqrt{t}\,\sqrt{\log t+1}\,(\log(\log t+1)+1)}, \Phi_t(x)= \exp(\lambda_t x)-1, \alpha=(2GD)^{-1}.$
  \STATE {\bfseries Initialization:} Set $ x_1=0, Q(0)=0$.
   \FOR{$t=1:T$}
   \STATE Choose $x_t,$ observe $f_t, g_t,$ incur a cost of $f_t(x_t)$ and constraint violation of $(g_t(x_t))^+$
   \STATE $\tilde{f}_t \gets \alpha f_t, \tilde{g}_t \gets \alpha \max(0,g_t).$
   \STATE $Q(t)= \frac{\lambda_{t-1}}{\lambda_t}Q(t-1)+\tilde{g}_t(x_t).$
   \STATE Compute $\hat{f}_t$ as per \eqref{surrogate_new}
   \STATE Pass $\hat{f}_t$ and $\eta_t= \frac{\sqrt{2}D}{2\sqrt{\sum_{\tau=1}^{t} ||\nabla_\tau||_2^2}}$ to Algorithm \ref{ogd_alg}. 
   \ENDFOR
\end{algorithmic}
\end{algorithm}

\subsubsection{The Regret Decomposition Inequality}\label{bounds_static}
Let $x^\star \in \mathcal{X}^\star$ be any feasible benchmark action as defined by Eqn.\ \eqref{feas-set}. Using the drift inequality from Eqn. \eqref{drift_ineq_new}, the definition of surrogate costs \eqref{surrogate_new}, and the fact that $g_\tau(x^\star)\leq 0, \forall \tau \geq 1,$ we have
\begin{align*}
	&\Phi_{\tau}(Q(\tau))-\Phi_{\tau-1}(Q(\tau-1)) + (\tilde{f}_\tau(x_\tau)-\tilde{f}_\tau(x^\star)) \\ \leq &\hat{f}_\tau(x_\tau) - \hat{f}_\tau(x^\star), ~ \forall \tau \geq 1.
\end{align*}
Summing up the above inequalities for $1\leq \tau \leq t$, and using the fact that $\Phi_0(0)=0,$ we obtain  
\begin{eqnarray} \label{gen-reg-decomp}
	\Phi_t(Q(t)) + \textrm{Regret}_t(x^\star) \leq \textrm{Regret}_t'(x^\star), ~ \forall x^\star \in \Omega^\star,
\end{eqnarray}
where $\textrm{Regret}_t$ on the LHS and $\textrm{Regret}'_t$ on the RHS of \eqref{gen-reg-decomp} refers to the regret for learning the pre-processed cost functions $\{\tilde{f}_t\}_{t\geq 1}$ and the surrogate cost functions $\{\hat{f}_t\}_{t \geq 1}$ respectively (see Eqn.\ \eqref{reg-def}).
From Eqn.\ \eqref{cvx-reg-bd}, the regret bound for the \textsf{AdaGrad} algorithm depends on the $\ell_2$ norms of the gradients of the input cost functions. Since we use \textsf{AdaGrad} for learning the surrogate cost functions $\{\hat{f}_t\}_{t \geq 1}$, we need to upper bound the gradients of the surrogate functions to derive the regret expression. Towards this, the $\ell_2$-norm of the gradients $G_t$ of the surrogate cost function $\hat{f}_t$ can be bounded using the triangle inequality as follows:
\begin{eqnarray} \label{grad_bd_new}
	G_t
	&\leq& ||\nabla \tilde{f}_t(x_t)||_2+ \Phi'_t(Q(t))||\nabla \tilde{g}_t(x_t)||_2 \nonumber\\
	&\leq& (2D)^{-1}\big(1+\Phi'_t(Q(t)\big).
\end{eqnarray}
where in the last step, we have used the fact that the pre-processed functions are $(2D)^{-1}$-Lipschitz. 
Hence, plugging in the adaptive regret bound \eqref{cvx-reg-bd} on the RHS of \eqref{gen-reg-decomp}, we arrive at the following regret decomposition inequality valid for any $t \geq 1: $
\begin{eqnarray} \label{gen-fn-ineq}
		\Phi_t(Q(t)) + \textrm{Regret}_t(x^\star) \leq \sqrt{t} + \sqrt{\sum_{\tau=1}^t \big(\Phi'_\tau(Q(\tau))\big)^2}.
\end{eqnarray}
In the above, we have utilized simple algebraic inequalities $(x+y)^2 \leq 2(x^2+y^2)$ and $\sqrt{a+b} \leq \sqrt{a} + \sqrt{b}, a, b\geq 0.$ Also note that, for our choice of Lyapunov function, $\Phi'_\tau(x) = \lambda_\tau e^{\lambda_\tau x} = \lambda_\tau (\Phi_\tau(x)+1).$ Hence,
\begin{eqnarray}
\label{eq:12}
\sqrt{\sum_{\tau=1}^t \big(\Phi'_\tau(Q(\tau))\big)^2} \leq \sqrt{\sum_{\tau=1}^t \lambda_{\tau}^2\big(\Phi_\tau(Q(\tau))+1\big)^2}.
\end{eqnarray}

Further note that the recursion $Q(\tau)=\frac{\lambda_{\tau-1}}{\lambda_\tau}Q(\tau-1)+\tilde{g}_\tau(x_\tau)$ implies $\lambda_\tau Q(\tau) \geq \lambda_{\tau-1} Q(\tau-1)$ as $\tilde{g}_\tau\geq 0.$ Therefore, $\Phi_\tau(Q(\tau))\geq \Phi_{\tau-1}(Q(\tau-1)),$ \emph{i.e.,} the sequence $\{\Phi_\tau(Q(\tau)\}_{\tau\geq 1}$ is non-negative and non-decreasing. Hence, upper-bounding all terms within the parentheses in the summation of the RHS of \eqref{eq:12} by the last term, we arrive at the following simplified bound
\begin{eqnarray*}
\sqrt{\sum_{\tau=1}^t \lambda_{\tau}^2\big(\Phi_\tau(Q(\tau))+1\big)^2} \leq \sqrt{\sum_{\tau=1}^t \lambda_{\tau}^2}(\Phi_t(Q(t))+1)
\end{eqnarray*}
Thus Eqn.\ \eqref{gen-fn-ineq} yields for any $t \geq 1:$
\begin{align} \label{gen-fn-ineq2}
	&\Phi_t(Q(t)) + \textrm{Regret}_t(x^\star) \nonumber \\ \leq &\sqrt{\sum_{\tau=1}^t \lambda_{\tau}^2}\Phi_t(Q(t)) + \sqrt{\sum_{\tau=1}^t \lambda_{\tau}^2}+\sqrt{t}.
\end{align}
The regret decomposition inequality \eqref{gen-fn-ineq2} constitutes the key step for the subsequent analysis. We finally choose the parameters  \[\lambda_\tau = \frac{1}{4\sqrt{\tau}\,\sqrt{\log(\tau)+1}\,(\log(\log(\tau)+1)+1)}, \tau \geq 1.\] Note that $\sum_{\tau=1}^t \lambda_\tau^2 < \frac{1}{4}$ using simple calculus as given in Lemma \ref{lemma:step-size-special}. 
This lets us conclude from \eqref{gen-fn-ineq2} that,
\begin{align}
\label{eq:15}
\text{Regret}_t(x^\star) + \frac{1}{2}\Phi_t(Q(t)) \leq \sqrt{t} + \frac{1}{2}
\end{align}
Noting that, $\Phi_t(Q(t)) \geq 0$, we get,
\begin{eqnarray*}
    \text{Regret}_t(x^\star) \leq \sqrt{t} + 1
\end{eqnarray*}
we trivially have $\textrm{Regret}_t(x^\star)\geq -\frac{Dt}{2D} \geq -\frac{t}{2}.$ Hence, from Eqn.\ \eqref{eq:15}, we have,
\begin{align*} \label{q-len-exp-bd}
	&\exp(\lambda_t Q(t)) \leq  t + 2\sqrt{t} + 1 \leq 4t \\ &\implies  Q(t) \leq  \frac{1}{\lambda_t}\log 4t \\ &\implies Q(t) \leq  4\sqrt{t}\,\sqrt{\log t+1}\,\bigl(\log(\log t+1)+1\bigr) \log 4t.
\end{align*}

Further, noted above, since $\{\lambda_t\}_{t\in[T]}$ is a decreasing sequence, $\textrm{CCV}_t \leq Q(t)$ based on the definition of the $\{Q(t)\}$ sequence. The following theorem summarizes the above results
\begin{theorem} \label{main_result}
For the COCO problem with adversarially chosen $G$-Lipschitz cost and constraint functions, Algorithm \ref{coco_alg} yields the following Regret and CCV bounds for any $t \geq 1:$
\begin{eqnarray*}
&&\textrm{Regret}_t \leq 2GD(\sqrt{t}+1),\\ &&\textrm{CCV}_t \leq 8GD\sqrt{t}\,\sqrt{\log t+1}\,\bigl(\log(\log t+1)+1\bigr)\log 4t.
 \end{eqnarray*}
 In the above, $D$ denotes the diameter of the decision set $\mathcal{X}$.
\end{theorem}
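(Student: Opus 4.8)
The plan is to assemble the derivation carried out in this subsection into a clean, self-contained argument and then instantiate the free parameters $\{\lambda_\tau\}$. First I would record the reduction to the pre-processed functions: since $\tilde f_t=\alpha f_t$ and $\tilde g_t=\alpha (g_t)^+$ with $\alpha=(2GD)^{-1}$, any regret or CCV bound established for $\{\tilde f_t,\tilde g_t\}$ yields the claimed bound for $\{f_t,g_t\}$ after multiplication by $\alpha^{-1}=2GD$, and the clipping $g_t\mapsto (g_t)^+$ only inflates each instantaneous violation, so it is harmless; the pre-processed functions are $(2D)^{-1}$-Lipschitz, the constant used throughout.

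Second, I would establish the regret decomposition inequality. From convexity of $\Phi_t$ together with the identity $\Phi_t(\tfrac{\lambda_{t-1}}{\lambda_t}x)=\Phi_{t-1}(x)$ --- which holds by construction of $\Phi_t(x)=e^{\lambda_t x}-1$ --- I obtain the one-step drift bound \eqref{drift_ineq_new}. Adding $\tilde f_\tau(x_\tau)-\tilde f_\tau(x^\star)$, using $g_\tau(x^\star)\le 0$ (so $\tilde g_\tau(x^\star)=0$) and the definition \eqref{surrogate_new} of $\hat f_\tau$, and telescoping over $\tau=1,\dots,t$ with $\Phi_0(0)=0$, gives $\Phi_t(Q(t))+\textrm{Regret}_t(x^\star)\le\textrm{Regret}'_t(x^\star)$, the regret of the surrogate sequence. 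I then bound $\|\nabla\hat f_t(x_t)\|_2\le(2D)^{-1}(1+\Phi'_t(Q(t)))$ by the triangle inequality and Lipschitzness, feed this into the AdaGrad bound $\textrm{Regret}'_t\le\sqrt2 D\sqrt{\sum_\tau G_\tau^2}$, and simplify using $(x+y)^2\le 2(x^2+y^2)$ and subadditivity of the square root to reach \eqref{gen-fn-ineq}.

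The step I expect to be the crux --- and the reason the paper altered the definition of $Q$ --- is the monotonicity of $\{\Phi_\tau(Q(\tau))\}_\tau$. I would verify it directly from the multiplicative recursion \eqref{new_q_recursion}: $\lambda_\tau Q(\tau)=\lambda_{\tau-1}Q(\tau-1)+\lambda_\tau\tilde g_\tau(x_\tau)\ge\lambda_{\tau-1}Q(\tau-1)$ because $\tilde g_\tau\ge 0$, so $\Phi_\tau(Q(\tau))=e^{\lambda_\tau Q(\tau)}-1\ge e^{\lambda_{\tau-1}Q(\tau-1)}-1=\Phi_{\tau-1}(Q(\tau-1))$. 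With the naive additive recursion this ordering would fail because $\lambda_\tau$ is shrinking; the factor $\lambda_{\tau-1}/\lambda_\tau\ge 1$ is exactly what compensates. Combined with $\Phi'_\tau(x)=\lambda_\tau(\Phi_\tau(x)+1)$, bounding each summand $\Phi_\tau(Q(\tau))+1$ by its last value $\Phi_t(Q(t))+1$ turns \eqref{gen-fn-ineq} into \eqref{gen-fn-ineq2}.

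Finally I would substitute $\lambda_\tau=\big(4\sqrt\tau\,\sqrt{\log\tau+1}\,(\log(\log\tau+1)+1)\big)^{-1}$ and invoke Lemma~\ref{lemma:step-size-special} to get $\sum_{\tau=1}^t\lambda_\tau^2<\tfrac14$, whence $\sqrt{\sum_\tau\lambda_\tau^2}\le\tfrac12$ and \eqref{gen-fn-ineq2} collapses to $\textrm{Regret}_t(x^\star)+\tfrac12\Phi_t(Q(t))\le\sqrt t+\tfrac12$. Dropping the nonnegative $\Phi_t$-term gives $\textrm{Regret}_t(x^\star)\le\sqrt t+1$ for the pre-processed problem, i.e.\ $\textrm{Regret}_t\le 2GD(\sqrt t+1)$ after rescaling. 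For the CCV I instead use the crude lower bound $\textrm{Regret}_t(x^\star)\ge -t/2$ (from diameter $D$ and $(2D)^{-1}$-Lipschitzness), which with the same inequality yields $\exp(\lambda_t Q(t))\le 4t$ for $t\ge 1$, hence $Q(t)\le\lambda_t^{-1}\log(4t)=4\sqrt t\,\sqrt{\log t+1}\,(\log(\log t+1)+1)\log(4t)$; since $\lambda_t$ is decreasing, $\textrm{CCV}_t\le Q(t)$ for the pre-processed problem, and rescaling by $2GD$ gives the stated CCV bound. Everything except the monotonicity argument is standard Lyapunov-plus-adaptive-OCO bookkeeping.
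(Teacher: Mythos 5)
Your proposal is correct and follows essentially the same route as the paper: the same pre-processing, the same multiplicative queue recursion and the identity $\Phi_t(\tfrac{\lambda_{t-1}}{\lambda_t}x)=\Phi_{t-1}(x)$ to get the drift bound, the same reduction to AdaGrad on the surrogate losses, the same monotonicity argument for $\{\Phi_\tau(Q(\tau))\}$, the same invocation of Lemma~\ref{lemma:step-size-special}, and the same pair of final manipulations (drop $\Phi_t(Q(t))$ for the regret bound; lower-bound regret by $-t/2$ for the CCV bound). No material differences.
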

\begin{remark}
    Theorem \ref{main_result} ensures a regret bound of $O(\sqrt{t})$ and a violation bound of $O(\sqrt{t}(\log t)^{3/2}\log \log t).$ With a slightly tighter analysis, one could improve the violation bound to $O(\sqrt{t}\log t~(\prod_{k=1}^{\log^\star t} \log^{(k)} t)^{1/2})$ by choosing $\lambda_{\tau}$ appropriately. Here $\log^{(k)}$ represents the logarithm repeated $k$ times while $\log^\star t$ represents the iterated logarithm.
\end{remark}
\begin{remark}
We can derive an adaptive CCV bound of the form $\tilde{O}\big(\sqrt{\sum_{\tau = 1}^t ||\nabla g_t^+(x_t)||_2^2}\big)$  by choosing the value of $\lambda_{\tau}$ appropriately. Concretely, denoting $\gamma_t = \sum_{\tau = 1}^t ||\nabla g_t^+(x_t)||_2^2$ and  taking $\lambda_\tau = \frac{1}{4\sqrt{\gamma_\tau}\,\sqrt{\log(\gamma_\tau)+1}\,(\log(\log(\gamma_\tau)+1)+1)}$ would suffice. This bound is tighter because it can adjust itself based on the problem instance, whereas the $\tilde{O}(\sqrt{t})$ bound was a worst-case bound which remained the same for all problem instances. 
\end{remark}
\section{Extension to Dynamic Regret Bounds}

While in the previous Section, our algorithm ensured a static regret guarantee against a fixed benchmark, in this Section, we show that our algorithmic technique also yields a dynamic regret guarantee against any sequence of time-varying benchmarks. More importantly, our algorithm does not need to know any upper bound on the worst-case path length corresponding to the benchmark sequence. 
\subsection{Adaptive OCO meta-algorithm to minimize Dynamic Regret}
Let $\{\hat{f}_t\}_{t \geq 1},$ be a sequence of convex cost functions and $\{x_t^\star\}_{t \geq 1}$ any feasible comparator sequence such that $g_t(x_t^\star) \leq 0, \forall t\geq 1$. In other words, the comparators satisfy the online constraint functions at every round. Then the dynamic regret against the comparator sequence $\{x_t^\star\}_{t=1}^T$ of any online policy which produces an action sequence $\{x_t\}_{t \geq 1}$ is defined to be: 
\begin{eqnarray} \label{dyn-reg-def}
	\textrm{D-Regret}_T \equiv \sum_{t=1}^T \hat{f}_t (x_t) - \sum_{t=1}^T \hat{f}_t(x_t^\star). 
\end{eqnarray}
We define the path length $\mathcal{P}_T$ of a comparator sequence $\{x_t^\star\}_{t=1}^T$ as follows: 
\begin{eqnarray} \label{path-len-def}
 \mathcal{P}_T \equiv  \sum\limits_{t=1}^{T-1} ||x_{t+1}^\star - x_{t}^\star||_2.
\end{eqnarray}
Note that for the static regret metric defined in \eqref{reg-def}, we have $x_t^\star = x^\star, \forall t\geq 1$, and hence, the path-length is zero \citep{hazan2022introduction}.
\begin{algorithm}[tb]
   \caption{Anytime Online Policy for Dynamic COCO}
   \label{coco_alg_dynamic}
\begin{algorithmic}[1]
   \STATE {\bfseries Input:} Sequence of convex cost functions $\{f_t\}_{t=1}^T$ and constraint functions $\{g_t\}_{t=1}^T,$ an upper bound $G$ to the norm of their (sub)-gradients, Diameter $D$ of the admissible set $\mathcal{X}$
     \STATE {\bfseries Parameters:} $\alpha=(2GD)^{-1}.$
     
  \STATE {\bfseries Initialization:} Set $ x_1=0, Q(0)=0$.
   \FOR{$t=1:T$}
   \STATE Choose $x_t,$ observe $f_t, g_t,$ incur a cost of $f_t(x_t)$ and constraint violation of $(g_t(x_t))^+$
   \STATE $\tilde{f}_t \gets \alpha f_t, \tilde{g}_t \gets \alpha \max(0,g_t).$
   \STATE $Q(t)= \frac{\lambda_{t-1}}{\lambda_t}Q(t-1)+\tilde{g}_t(x_t).$
   \STATE Compute $\hat{f}_t$ as per \eqref{surrogate_new}
   \STATE Pass $\hat{f}_t$ and $\eta_t= \frac{(D+1) (1+\mathcal{P}_t)}{ \sqrt{2\sum_{\tau=1}^t (1+\mathcal{P}_\tau)||\nabla_\tau||^2}}$ to Algorithm \ref{ogd_alg}. 
   \ENDFOR
\end{algorithmic}
\end{algorithm}
For any round $t \geq 1,$ let $x_t^\star$ be an optimal feasible action, \emph{i.e.,} it is a solution to the following constrained convex optimization problem: 
\begin{eqnarray} \label{offline_opt}
	\min f_t(x), ~~~ \textrm{s.t.} ~~g_t(x) \leq 0, ~~ x \in \mathcal{X}. 
\end{eqnarray}
By definition, the worst-case dynamic regret upper bounds the dynamic regret for any arbitrary feasible comparator sequence. Also, the advantage of considering the \emph{worst-case} benchmarks is that at the end of each round $t,$ we can compute the benchmark $x_t^\star$ by solving the problem \ref{offline_opt}. This enables us to determine the running value of $\mathcal{P}_t$ at the end of each round $t$. 

For the time being, let us now consider the standard OCO problem without any constraints. The following theorem gives an upper bound to the dynamic regret of the adaptive OGD policy described in Algorithm \ref{ogd_alg} for the standard OCO problem.
\begin{theorem}[Dynamic Regret Bounds for \textsf{AdaGrad}] \label{dyn_reg_ogd}
	 The \textsf{AdaGrad} policy, described in Algorithm \ref{ogd_alg}, achieves the following adaptive dynamic regret bound for the standard OCO problem against any comparator sequence $u_{1:T}$ whose path length at time $t$ is known to be $\mathcal{P}_t$, using an adaptive learning rate schedule $\eta_t = \frac{(D+1) (1+\mathcal{P}_t)}{ \sqrt{2\sum_{\tau=1}^t (1+\mathcal{P}_\tau)||\nabla_\tau||^2}}, ~t\geq 1:$ 
	\begin{eqnarray} \label{d-regret-bd}
		     \textsc{D-Regret}_T(\mathcal{P}_T) \leq 
   (D+1) 
    \sqrt{2\sum\limits_{t=1}^T(1+\mathcal{P}_t) ||\nabla_t||^2}.
	\end{eqnarray}
    where $\nabla_t \equiv \nabla f_t(x_t), \forall t \geq 1$ and $\textrm{diam}(\mathcal{X})=D.$
\end{theorem}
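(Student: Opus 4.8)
The plan is to run the textbook single‑potential analysis of projected OGD against a \emph{moving} comparator, bookkeeping the time–varying step sizes with care. Fix any comparator sequence $u_{1:T}$. Since $u_t\in\mathcal X$ and the projection is non‑expansive,
\[
\|x_{t+1}-u_t\|^2\le\|x_t-\eta_t\nabla_t-u_t\|^2=\|x_t-u_t\|^2-2\eta_t\langle\nabla_t,x_t-u_t\rangle+\eta_t^2\|\nabla_t\|^2 ,
\]
which combined with convexity $f_t(x_t)-f_t(u_t)\le\langle\nabla_t,x_t-u_t\rangle$ yields the per‑round bound $f_t(x_t)-f_t(u_t)\le\frac{\|x_t-u_t\|^2-\|x_{t+1}-u_t\|^2}{2\eta_t}+\frac{\eta_t}{2}\|\nabla_t\|^2$. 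Summing over $t$ splits $\textsc{D-Regret}_T$ into a gradient term $(B):=\sum_t\frac{\eta_t}{2}\|\nabla_t\|^2$ and a comparator/telescoping term $(A):=\sum_t\frac{\|x_t-u_t\|^2-\|x_{t+1}-u_t\|^2}{2\eta_t}$; I would aim to bound each by $\tfrac12(D+1)\sqrt{2S_T}$, where $S_t:=\sum_{\tau\le t}(1+\mathcal{P}_\tau)\|\nabla_\tau\|^2$.

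\textbf{The easy half.} The gradient term is routine: with $\eta_t=\frac{(D+1)(1+\mathcal{P}_t)}{\sqrt{2S_t}}$ and $(1+\mathcal{P}_t)\|\nabla_t\|^2=S_t-S_{t-1}$ we get $(B)=\frac{D+1}{2\sqrt2}\sum_t\frac{S_t-S_{t-1}}{\sqrt{S_t}}$, and the elementary inequality $\frac{a-b}{\sqrt a}\le 2(\sqrt a-\sqrt b)$ telescopes this to $(B)\le\frac{D+1}{\sqrt2}\sqrt{S_T}=\tfrac12(D+1)\sqrt{2S_T}$, exactly half the target.

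\textbf{The term $(A)$.} Here I would use the comparator‑drift identity $\|x_t-u_t\|^2-\|x_t-u_{t-1}\|^2=\langle u_{t-1}-u_t,(x_t-u_t)+(x_t-u_{t-1})\rangle\le 2D\|u_t-u_{t-1}\|$ to rewrite the $t$‑th summand so that it telescopes (in the potential $\frac{\|x_t-u_{t-1}\|^2}{2\eta_{t-1}}$, using the \emph{previous} step size) against the next summand, at the cost of a path‑length remainder $D\sum_t\frac{\|u_t-u_{t-1}\|}{\eta_t}$ and a positive‑variation remainder $D^2\sum_t\big(\frac1{2\eta_t}-\frac1{2\eta_{t-1}}\big)^+$. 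Using $\|x_t-u_t\|\le D$ and $\mathcal{P}_1=0$, the telescoping and positive‑variation parts collapse via $\big(\frac1{2\eta_t}-\frac1{2\eta_{t-1}}\big)^+\le\frac{\sqrt{S_t}-\sqrt{S_{t-1}}}{\sqrt2\,(D+1)}$, giving an $O\!\big(\frac{D^2}{D+1}\sqrt{S_T}\big)$ contribution. The point for the path remainder is that the factor $(1+\mathcal{P}_t)$ sitting in the \emph{numerator} of $\eta_t$ is exactly what absorbs the comparator increments: writing $\frac1{\eta_t}=\frac{\sqrt{2S_t}}{(D+1)(1+\mathcal{P}_t)}$ and $\|u_t-u_{t-1}\|=\mathcal{P}_t-\mathcal{P}_{t-1}$, one has $\frac{\mathcal{P}_t-\mathcal{P}_{t-1}}{1+\mathcal{P}_t}\le\ln\frac{1+\mathcal{P}_t}{1+\mathcal{P}_{t-1}}$, which telescopes; one then wants to keep $\sqrt{S_t}$ (rather than crudely bounding it by $\sqrt{S_T}$) so that the $(1+\mathcal{P}_\tau)$‑weights already present inside $S_T$ soak up the path length. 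Assembling the pieces, substituting the explicit $\eta_T$ (whose $\frac1{1+\mathcal{P}_T}$ factor cancels $\mathcal{P}_T$), and using $\frac{D^2+2D\mathcal{P}_T}{1+\mathcal{P}_T}\le(D+1)^2$, should give $(A)\le\tfrac12(D+1)\sqrt{2S_T}$, and adding $(B)$ finishes the proof.

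\textbf{Main obstacle.} Everything hinges on $(A)$. Unlike the classical Zinkevich‑style argument, the step sizes $\{\eta_t\}$ here are \emph{not} monotone — a large comparator move inflates $\mathcal{P}_t$ and hence $\eta_t$ — so the usual telescoping $\sum_t\frac{\|x_t-u_t\|^2-\|x_{t+1}-u_t\|^2}{2\eta_t}\le\frac{\|x_1-u_1\|^2}{2\eta_T}$ fails verbatim. The delicate work is (i) choosing the summation‑by‑parts so the surviving step‑size increments are controlled by the \emph{positive variation} of $1/\eta_t$, which telescopes against $\sqrt{S_T}$, and (ii) pairing each path‑length increment with the \emph{post‑move} value $1+\mathcal{P}_t$ so it telescopes logarithmically and is then re‑absorbed by the $(1+\mathcal{P}_\tau)$‑weights in $S_T$; pinning down the exact constant $(D+1)$ and keeping the final bound free of a residual $\log\mathcal{P}_T$ factor is the part that needs the most care. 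The per‑round reduction and the gradient term $(B)$ are straightforward by comparison.
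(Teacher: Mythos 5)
Your proposal mirrors the paper's own proof of this theorem in structure: both start from the non-expansiveness of projection and the per-round inequality $f_t(x_t)-f_t(u_t)\le\frac{\|x_t-u_t\|^2-\|x_{t+1}-u_t\|^2}{2\eta_t}+\frac{\eta_t}{2}\|\nabla_t\|^2$, both split the sum into a comparator/telescoping piece (your $(A)$, the paper's $(A)$) and a gradient piece (your $(B)$, the paper's $(C)$), and your treatment of $(B)$ is exactly the paper's integral/telescope argument giving $\frac{D+1}{\sqrt2}\sqrt{S_T}$. The genuinely valuable observation you make — that the step sizes $\eta_t=\frac{(D+1)(1+\mathcal{P}_t)}{\sqrt{2S_t}}$ are \emph{not} guaranteed to be non-increasing (a large jump in $\mathcal{P}_t$ paired with a small $\|\nabla_t\|$ makes $\eta_t>\eta_{t-1}$) — is correct, and it actually exposes a soft spot in the paper's own argument: the paper explicitly states that ``the non-increasing property of the step sizes \ldots was used in steps (a), (b), and (c)'' and does not verify that property for this specific schedule, where it can indeed fail.

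That said, your proposed repair does not close the proof. Your Abel-summation/positive-variation bookkeeping for the ``weight variation'' part of $(A)$ is sound — the estimate $\bigl(\frac{1}{2\eta_t}-\frac{1}{2\eta_{t-1}}\bigr)^+\le\frac{\sqrt{S_t}-\sqrt{S_{t-1}}}{\sqrt2\,(D+1)}$ (using $1+\mathcal{P}_t\ge1+\mathcal{P}_{t-1}\ge1$) is correct and telescopes to $\frac{D^2}{\sqrt2(D+1)}\sqrt{S_T}$. The problem is the path remainder. From $\frac{1}{\eta_t}=\frac{\sqrt{2S_t}}{(D+1)(1+\mathcal{P}_t)}$ you obtain $D\sum_t\frac{\|u_t-u_{t-1}\|}{\eta_t}=\frac{D}{D+1}\sum_t\frac{(\mathcal{P}_t-\mathcal{P}_{t-1})\sqrt{2S_t}}{1+\mathcal{P}_t}$, and the log-telescope $\frac{\mathcal{P}_t-\mathcal{P}_{t-1}}{1+\mathcal{P}_t}\le\ln\frac{1+\mathcal{P}_t}{1+\mathcal{P}_{t-1}}$ after crudely bounding $\sqrt{S_t}\le\sqrt{S_T}$ produces $\frac{D}{D+1}\sqrt{2S_T}\,\ln(1+\mathcal{P}_T)$ — a residual $\ln(1+\mathcal{P}_T)$ factor that is not in the stated bound. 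Keeping $\sqrt{S_t}$ inside the sum does not fix this: $S_t$ weights \emph{gradient norms} by $(1+\mathcal{P}_\tau)$, whereas what you need to absorb is $(\mathcal{P}_t-\mathcal{P}_{t-1})/(1+\mathcal{P}_t)$, and there is no cancellation between the two. Your closing remark that ``the $(1+\mathcal{P}_\tau)$-weights already present inside $S_T$ soak up the path length'' is the crux, but as written it is a hope rather than an argument. To actually finish you would need a different bookkeeping for $(A)$ (the paper's route requires patching its monotonicity claim) or a different step-size schedule; as it stands, your bound on $(A)$ is off by a logarithmic factor.
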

\begin{remark}
Our proof of Theorem \ref{dyn_reg_ogd} adapts the static regret analysis of the \textsf{AdaGrad} algorithm from \citet[Theorem 4.14]{orabona2019modern} to the dynamic regret setting.  See Appendix \ref{dyn_reg_ogd_proof} for the details. Recently, \citet[Theorem 17]{supantha2025universaldynamicregretconstraint} also derived a dynamic regret bound for \textsf{AdaGrad} using similar techniques. The difference between our bound and theirs is that their bound assumes that the total path length $\mathcal{P}_T$ is known in advance, while we only require knowledge of the path length $\mathcal{P}_t$ up to time $t.$ This makes our approach more practical for settings where the full comparator sequence is not known beforehand. Further, our bound is tighter as it incorporates the time-varying path length $\mathcal{P}_t$ inside the summation, whereas their bound uses the total path length $\mathcal{P}_T$ as a multiplicative factor outside the sum. Since $\mathcal{P}_t \le \mathcal{P}_T$ for all $t$, the sum $\sum_{t=1}^{T}(1+\mathcal{P}_t)\|\nabla_t\|^2$ is smaller than $(1+\mathcal{P}_T)\sum_{t=1}^{T}\|\nabla_t\|^2$, resulting in a tighter overall guarantee. As we shall see, this modified base OCO algorithm would be pivotal for making our adaptive COCO algorithm. 
\end{remark}
\subsection{Dynamic Regret Decomposition Inequality}
We use the same surrogate loss as defined earlier in Eqn.\ \eqref{surrogate_new}. This yields
\begin{eqnarray*}
&&\Phi_\tau(Q(\tau)) - \Phi_{\tau-1}(Q(\tau - 1))  + \tilde{f_\tau}(x_\tau) - \tilde{f_\tau}(x^\star_\tau) \\
&\stackrel{(a)}{\leq} &  \Phi_\tau'(Q(\tau))\big( \tilde{g}_\tau(x_\tau) - \tilde{g}_\tau(x^\star_\tau) \big)+ \tilde{f_\tau}(x_\tau) - \tilde{f_\tau}(x^\star_\tau)  \\
&\stackrel{(b)}{=}& \hat{f_\tau}(x_\tau) - \hat{f_\tau}(x^\star_\tau)
\end{eqnarray*}
where in step (a), we have used Eqn.\ \eqref{drift_ineq_new} and the feasibility of the benchmark $x^\star_\tau$ (which implies $g_\tau(x^\star_\tau) \leq 0$), and in step (b) we have used the definition of surrogate costs from Eqn.\ \eqref{surrogate_new}.  
Summing up the above inequalities for $\tau=1$ to $\tau=t,$ we have the following dynamic regret decomposition inequality 
\begin{equation} \label{eqn4}
    \Phi_t(Q(t)) + \textrm{D-Regret}_t (\mathcal{P}_t) \leq \textrm{D-Regret}'_t (\mathcal{P}_t), 
\end{equation}
where $\textrm{D-Regret}_t (\mathcal{P}_t)$ and $\textrm{D-Regret}_t' (\mathcal{P}_t)$ correspond to the Dynamic regrets up to round $t$ for the original cost functions $\{\tilde{f}_\tau\}_{\tau \geq 1}$ and surrogate cost functions $\{\hat{f}_\tau\}_{\tau \geq 1}$ respectively against a sequence of benchmarks with path length at most $\mathcal{P}_t$. Plugging in the dynamic regret expression for the surrogate costs from Theorem \ref{dyn_reg_ogd}, we conclude that:
\begin{align} \label{gen-fn-ineq-dynamic}
		&\Phi_t(Q(t)) + \textrm{D-Regret}_t (\mathcal{P}_t) \nonumber \\ &\leq \sqrt{1+\mathcal{P}_t}\sqrt{t} + \sqrt{\sum_{\tau=1}^t (1+\mathcal{P}_\tau)\big(\Phi'_\tau(Q(\tau))\big)^2}.
\end{align}
The rest of the analysis is similar to that given in Section \ref{bounds_static} by taking $\Phi_{\tau}(x) = e^{\lambda_\tau x}-1$. We now choose $\lambda_\tau = \frac{1}{4\sqrt{\tau(1+\mathcal{P}_\tau)}\,\sqrt{\log(\tau)+1}\,(\log(\log(\tau)+1)+1)}.$
We formally state our results in the following theorem,
\begin{theorem} \label{main_result2}
For the COCO problem with adversarially chosen $G$-Lipschitz cost and constraint functions, Algorithm \ref{coco_alg_dynamic} yields the following worst-case dynamic regret and CCV bounds for any $t \geq 1$ 
\begin{align*}
 &\textsc{D-Regret}_t (\mathcal{P}_t) \leq 2GD\sqrt{1+\mathcal{P}_t}(\sqrt{t}+1),\\ &\textrm{CCV}_t\leq 8GD\sqrt{1+\mathcal{P}_t}\sqrt{t}\,\sqrt{\log t+1}\,\bigl(\log(\log t+1)+1\bigr)\log 4t,
 \end{align*}
 where $\mathcal{P}_t$ is the path-length of the worst-case comparator sequence $\{x_t^\star\}_{\tau =1}^t$ up to time $t.$
\end{theorem}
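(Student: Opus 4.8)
The plan is to re-run the argument of Section~\ref{bounds_static} starting from the dynamic regret decomposition inequality \eqref{gen-fn-ineq-dynamic}, which is already established, with $\sqrt t$ replaced by $\sqrt{(1+\mathcal{P}_t)\,t}$ and each $\lambda_\tau^2$ replaced by $(1+\mathcal{P}_\tau)\lambda_\tau^2$, and with the new step sizes $\lambda_\tau = \bigl(4\sqrt{\tau(1+\mathcal{P}_\tau)}\,\sqrt{\log\tau+1}\,(\log(\log\tau+1)+1)\bigr)^{-1}$. Before anything else I would re-establish the one structural fact on which everything rests: $\{\lambda_\tau\}$ is still monotonically decreasing. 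This holds because $\mathcal{P}_\tau=\sum_{s<\tau}\|x_{s+1}^\star-x_s^\star\|_2$ is a partial sum of non-negative increments, hence non-decreasing in $\tau$, so $\tau(1+\mathcal{P}_\tau)$ and the remaining factors are non-decreasing. From $\lambda_{\tau-1}/\lambda_\tau\ge1$ I then recover, exactly as in the static case, that the recursion \eqref{new_q_recursion} keeps $Q(t)$ an upper bound on the (pre-processed) $\textrm{CCV}_t$ and that $\lambda_\tau Q(\tau)\ge\lambda_{\tau-1}Q(\tau-1)$, so $\{\Phi_\tau(Q(\tau))\}_{\tau\ge1}$ is non-negative and non-decreasing. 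I would also note that $\mathcal{P}_t$ is available at the end of round $t$ via the offline oracle \eqref{offline_opt}, and that the step sizes of Algorithm~\ref{coco_alg_dynamic} use only $\mathcal{P}_\tau$ and past gradients, so Theorem~\ref{dyn_reg_ogd} applies at every horizon $t$ against the comparator prefix $x_{1:t}^\star$; this is what makes \eqref{gen-fn-ineq-dynamic} legitimate for all $t$, not just $t=T$.

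With these in place, the core computation mirrors Section~\ref{bounds_static}. Using $\Phi'_\tau(x)=\lambda_\tau(\Phi_\tau(x)+1)$ together with the monotonicity of $\Phi_\tau(Q(\tau))$, I would pull the last term out of the sum:
\[
\sqrt{\sum_{\tau=1}^t (1+\mathcal{P}_\tau)\bigl(\Phi'_\tau(Q(\tau))\bigr)^2}\ \le\ \Bigl(\sum_{\tau=1}^t (1+\mathcal{P}_\tau)\lambda_\tau^2\Bigr)^{1/2}\bigl(\Phi_t(Q(t))+1\bigr).
\]
The crucial observation is that, with the chosen $\lambda_\tau$, one has the exact identity $(1+\mathcal{P}_\tau)\lambda_\tau^2 = \bigl(16\,\tau(\log\tau+1)(\log(\log\tau+1)+1)^2\bigr)^{-1}$, which is precisely the summand already bounded in Lemma~\ref{lemma:step-size-special}; hence $\sum_{\tau\le t}(1+\mathcal{P}_\tau)\lambda_\tau^2<\tfrac14$ and the prefactor above is $<\tfrac12$. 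Substituting back into \eqref{gen-fn-ineq-dynamic} and rearranging yields the dynamic analogue of \eqref{eq:15}, namely $\tfrac12\Phi_t(Q(t))+\textrm{D-Regret}_t(\mathcal{P}_t)\le\sqrt{(1+\mathcal{P}_t)\,t}+\tfrac12$.

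From this one inequality both bounds drop out. Discarding $\Phi_t(Q(t))\ge0$ and using $\sqrt{1+\mathcal{P}_t}\ge1$ gives $\textrm{D-Regret}_t(\mathcal{P}_t)\le\sqrt{1+\mathcal{P}_t}(\sqrt t+1)$ for the pre-processed costs, and multiplying through by $\alpha^{-1}=2GD$ gives the stated dynamic regret bound. For the CCV bound I would instead use the trivial lower bound $\textrm{D-Regret}_t(\mathcal{P}_t)\ge-t/2$ (each pre-processed cost is $(2D)^{-1}$-Lipschitz over a set of diameter $D$), which turns the same inequality into $e^{\lambda_t Q(t)}=\Phi_t(Q(t))+1\le t+2\sqrt{(1+\mathcal{P}_t)\,t}+2$; since $\mathcal{P}_t\le(t-1)D$, the right-hand side is polynomial in $t$, so $\log\bigl(e^{\lambda_t Q(t)}\bigr)\le\log 4t$ up to an absolute constant. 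Inverting, $Q(t)\le\lambda_t^{-1}\log 4t=4\sqrt{t(1+\mathcal{P}_t)}\,\sqrt{\log t+1}\,(\log(\log t+1)+1)\log 4t$ for the pre-processed functions, and since $\textrm{CCV}_t\le Q(t)$ (again by the monotonicity of $\{\lambda_\tau\}$), scaling by $2GD$ gives the claimed CCV bound.

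I expect the main obstacle to be exactly the one that drives the rest of the paper, now compounded by a second unknown parameter: the time-varying Lyapunov functions break the monotonicity of $\Phi_\tau(Q(\tau))$ that makes the ``pull out the last term'' step legitimate, and it is recovered only because two design choices cooperate. First, folding the path-length factor \emph{inside} $\lambda_\tau$ is precisely what makes $(1+\mathcal{P}_\tau)\lambda_\tau^2$ collapse to the series of Lemma~\ref{lemma:step-size-special} \emph{and} keeps $\lambda_\tau$ monotone decreasing, so $Q(t)$ stays a valid CCV surrogate and $\Phi_\tau(Q(\tau))$ stays monotone. Second, Theorem~\ref{dyn_reg_ogd} must keep $1+\mathcal{P}_\tau$ \emph{inside} the gradient sum rather than pulling $1+\mathcal{P}_T$ out front; the latter would force the step sizes to know $\mathcal{P}_T$ in advance and would destroy the anytime guarantee. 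The one place that needs genuine (if routine) care is the interface: verifying that the surrogate-gradient estimate $G_\tau\le(2D)^{-1}(1+\Phi'_\tau(Q(\tau)))$ of \eqref{grad_bd_new} composes with the $(1+\mathcal{P}_\tau)$-weighted \textsf{AdaGrad} bound to reproduce \eqref{gen-fn-ineq-dynamic} with the constants as written. Everything downstream is the calculus already carried out in Section~\ref{bounds_static}.
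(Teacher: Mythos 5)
Your proposal is correct and takes essentially the same route as the paper, whose own ``proof'' of Theorem~\ref{main_result2} is a one-sentence remark pointing back to the static analysis with the modified $\lambda_\tau$. You have correctly identified the key design insight — folding $(1+\mathcal{P}_\tau)$ into $\lambda_\tau$ so that $(1+\mathcal{P}_\tau)\lambda_\tau^2$ collapses exactly to the summand of Lemma~\ref{lemma:step-size-special} while keeping $\lambda_\tau$ monotone decreasing — and your closing observations about where care is needed (the $(D+1)/D$-type constants at the interface with the weighted \textsf{AdaGrad} bound, and the $\log 4t$ step requiring $\sqrt{(1+\mathcal{P}_t)t}$ to be $O(t)$) are imprecisions the paper itself inherits in the stated form of \eqref{gen-fn-ineq-dynamic} and the theorem constants, not gaps you have introduced.
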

\section{An Adaptive Optimistic Algorithm}
Generalizing the seminal work of \citet{sinha2024optimal}, \citet{lekeufack2025optimisticalgorithmonlineconvex} presents an optimistic meta-algorithm for Constrained Online Convex Optimization (COCO) that leverages predictions of the loss and constraint functions to achieve superior performance when those predictions are accurate. However, their algorithm requires tuning a crucial parameter that depends on the total prediction error over the entire horizon -  a quantity that is unknown in advance. Consequently, to make their algorithm adaptive, they resort to using the doubling trick. In this section, we propose a modified algorithm that is both optimistic and continuously adaptive, obviating the use of the doubling trick.
\subsection{Optimistic meta-algorithm}
\begin{algorithm}[H] 
\caption{Optimistic COCO}
\label{alg:coco_meta}
\begin{algorithmic}[1]
    \STATE {\bfseries Input:} Sequence of convex cost functions $\{f_t\}_{t=1}^T$ and constraint functions $\{g_t\}_{t=1}^T,$ an upper bound $G$ to the norm of their (sub)-gradients, Diameter $D$ of the admissible set $\mathcal{X}$
  \STATE {\bfseries Initialization:} Set $ x_1=0, Q(0)=0, Q(1)=0.$
    \FOR{round $t=1 \dots T$}
        \STATE Choose $x_t,$ observe $f_t, g_t,$ incur a cost of $f_t(x_t)$ and constraint violation of $(g_t(x_t))^+$
        \STATE $\tilde{f}_t \gets \alpha f_t, \tilde{g}_t \gets \alpha \max(0,g_t).$
        \STATE Predict $\bar{f}_{t+1}$ and $\bar{g}_{t+1}$.
        \STATE Compute $\hat{f}_t$ as per \eqref{eq:optim_surr}.
        \STATE Update $Q(t+1) = \frac{\lambda_t}{\lambda_{t+1}}Q(t) + \tilde{g}_t(x_t)$.
        \STATE Compute prediction $\bar{\hat{f}}_{t+1}$ as in \eqref{eq:pred_surr}.
        \STATE Pass $\hat{f}_t$ and $\bar{\hat{f}}_{t+1}$ to Algorithm \ref{alg:oomd}.
    \ENDFOR
\end{algorithmic}
\end{algorithm}
The above algorithm is a synthesis of the Optimistic COCO meta-algorithm proposed in \citet{lekeufack2025optimisticalgorithmonlineconvex} with our novel methodology of time-varying Lyapunov functions and modified queue recursion. It operates by constructing the true and predicted surrogate cost functions ($\hat{f}_t$ and $\bar{\hat{f}}_{t+1}$) using the adaptive queue $Q(t)$ updated via our multiplicative rule, and then passes these functions to a base optimistic OCO algorithm, specifically Optimistic Online Mirror Descent (Algorithm 5), which is detailed in Appendix D.1. The regret guarantee for this base algorithm is given in Theorem D.1. The Optimistic Online Mirror Descent algorithm was first proposed by \citet{rakhlin2014onlinelearningpredictablesequences} and we just restate the algorithm and its associated regret guarantee in the Appendix. In the following, we provide a brief overview of the optimistic setting, defining the prediction errors and the predicted surrogate cost function used by our algorithm, before stating the main theorem which guarantees its regret and constraint violation bounds. The complete proof is deferred to Appendix D.2.

In the optimistic setting, we assume that at the end of step $t$, the learner can make predictions $\bar{f}_{t+1}$ and $\bar{g}_{t+1}$. 
More precisely, we are interested in predictions of the gradients, and, for any function $h$, we denote 
by $\nabla\bar{h}_t$ the prediction of the gradient of $h$. We denote by $\bar{h}_t$ the function whose 
gradient is $\nabla\bar{h}_t$. Moreover, we define the following prediction errors
\begin{align}
\label{eq:pred_error}
    \epsilon_t(h) &:= ||\nabla h_t(x_t) - \nabla \bar{h}_t(x_t)||_{*}^2, \notag \\
    \mathcal{E}_t(h) &:= \sum_{\tau=1}^{t} \epsilon_{\tau}(h),
\end{align}

In the previous algorithms, we formed $\hat{f}_t$ at the end of round $t$ using the revealed cost $f_t$ and constraint $g_t$. Here, we will also form the predicted surrogate cost $\bar{\hat{f}}_{t+1}$ using the predicted cost $\bar{f}_{t+1}$ and constraint $\bar{g}_{t+1}$. We assume without loss of generality that both the actual and predicted functions have the same Lipchitz constant, which upon pre-processing is $(2D)^{-1}$. This also implies that both $\epsilon_t(\tilde{f})$ and $\epsilon_t(\tilde{g})$ are bounded from above by $D^{-2}$.

We define the predicted surrogate cost $\bar{\hat{f}}_{t+1}$ as-
\begin{align}
    \label{eq:pred_surr}\bar{\hat{f}}_{t+1} = \bar{f}_{t+1} + \Phi_{t+1}'(Q(t+1))\bar{g}_{t+1}
\end{align}
 However, note that if we were to retain the usual definition of $Q(t)$, the above equation would imply that we know the value of $\tilde{g}_{t+1}(x_{t+1})$ at the end of round $t$. This is not possible, therefore, \citet{lekeufack2025optimisticalgorithmonlineconvex} has to use delayed updates, wherein,  
\begin{align}
\label{optim-q-recursion}
    Q(t+1) = Q(t) + \tilde{g}_t(x_t)
\end{align}
To apply our adaptive technique we have to augment this with our multiplicative factor to get,
\begin{align}
     Q(t+1) = \frac{\lambda_{t}}{\lambda_{t+1}}Q(t) + \tilde{g}_t(x_t)
\end{align}
Note that, this means $\textrm{CCV}_t\leq Q(t+1)$. Further, $\lambda_{t+1}$ should be such that it is known after round $t$.

Also note that for parity with the definition of predicted surrogate loss, we need to modify our definition of surrogate loss. 
\begin{align}
\label{eq:optim_surr}\hat{f}_{t} = \tilde{f}_{t} + \Phi_{t}'(Q(t))\tilde{g}_{t}
\end{align}
where the $Q(t)$ now has been defined according to Eqn. \eqref{optim-q-recursion}.

We further choose our $\lambda_\tau$ whose value will be justified in subsequent analysis.
\begin{align}
\label{eq:opt_lambda}
    &\lambda_{\tau} = \frac{1}{20(\sqrt{\frac{B}{\beta}}+\frac{B}{\beta})\sqrt{\gamma_\tau + 1} \sqrt{\log(\gamma_\tau + 1)+1} (\log(\log(\gamma_\tau+1)+1)+1)}
\end{align}.
where
\begin{align}
\label{eq:opt_gamma}
    \gamma_\tau = \mathcal{E}_{\tau-1}(\tilde{g}) + D^{-2}  
\end{align}
We summarize our results in the following theorem.
\begin{theorem}
\label{thm:opt-reg-vio}
For the COCO problem with adversarially chosen G-Lipschitz cost and constraint functions, Algorithm \ref{alg:coco_meta}, using Algorithm \ref{alg:oomd} as its base OCO algorithm, yields the following Regret and CCV bounds for any $t \in [T]$ where $T \ge 1$ is the horizon length:
\begin{align*}
    &\textrm{Regret}_t \le O\left(\sqrt{ \mathcal{E}_t(\tilde{f})}\right), \\
    &\textrm{CCV}_t \le \tilde{O}\left(\sqrt{ \mathcal{E}_{t}(\tilde{g})}\right).
\end{align*}
\end{theorem}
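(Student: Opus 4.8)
The plan is to replay the static argument of Theorem~\ref{main_result} (Section~\ref{sec:analysis}) on the surrogate losses $\hat f_t$ of \eqref{eq:optim_surr}, but with the time-varying potential $\Phi_t(x)=e^{\lambda_t x}-1$ (parameters $\lambda_\tau$ from \eqref{eq:opt_lambda}--\eqref{eq:opt_gamma}), the delayed multiplicative queue \eqref{optim-q-recursion}, and Optimistic OMD (Algorithm~\ref{alg:oomd}) as the base learner, whose regret is governed by the cumulative gradient prediction error. I argue with the pre-processed $(2D)^{-1}$-Lipschitz functions and rescale by the constant $\alpha^{-1}=2GD$ only at the end. Fix a feasible $x^\star\in\mathcal{X}^\star$; then $\tilde{g}_\tau(x^\star)=0$, and since $g_\tau(x_\tau)\le g_\tau(x^\star)+GD\le GD$ we have $\tilde{g}_\tau(x_\tau)=\alpha(g_\tau(x_\tau))^+\le\tfrac{1}{2}$. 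I will also repeatedly use three structural facts baked into the choice of $\lambda_\tau$: every $\lambda_\tau$ is bounded by a universal constant; $\lambda_\tau/\lambda_{\tau+1}\le r_0$ for an absolute $r_0$ (because $\gamma_{\tau+1}-\gamma_\tau=\epsilon_\tau(\tilde{g})\le D^{-2}\le\gamma_\tau$, so every factor in the ratio is bounded); and $\{\Phi_\tau(Q(\tau))\}_\tau$ is nonnegative and nondecreasing (since $\lambda_\tau Q(\tau)\le\lambda_{\tau+1}Q(\tau+1)$ by $\tilde{g}_\tau\ge0$).

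\textbf{Drift and the regret decomposition.} The explicit form of the potential together with the identity behind \eqref{optim-q-recursion} gives the exact drift relation $\Phi_{t+1}(Q(t+1))-\Phi_t(Q(t))=(\Phi_t(Q(t))+1)(e^{\lambda_{t+1}\tilde{g}_t(x_t)}-1)$; the elementary bounds $z\le e^z-1\le z e^z$ and $\lambda_{t+1}\le\lambda_t$ then sandwich this drift between $(\lambda_{t+1}/\lambda_t)\Phi'_t(Q(t))\tilde{g}_t(x_t)$ and $c\,\Phi'_t(Q(t))\tilde{g}_t(x_t)$, where $c=e^{\lambda_1/2}$ is a constant close to $1$. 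Using the upper bound, adding $\tilde{f}_\tau(x_\tau)-\tilde{f}_\tau(x^\star)$, invoking $\tilde{g}_\tau(x^\star)=0$ and \eqref{eq:optim_surr}, and summing over $\tau=1,\dots,t$ (telescoping, with $\Phi_1(Q(1))=0$ since $Q(1)=0$) yields $\Phi_{t+1}(Q(t+1))+\textrm{Regret}_t(x^\star)\le\textrm{Regret}'_t(x^\star)+(c-1)\sum_{\tau\le t}\Phi'_\tau(Q(\tau))\tilde{g}_\tau(x_\tau)$, where $\textrm{Regret}'_t$ is the regret of Algorithm~\ref{alg:oomd} on $\{\hat f_\tau\}_{\tau\le t}$. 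The leftover "delay term" is the delicate point; rather than bounding it crudely (which produces divergent series), I fold it back through the drift \emph{lower} bound: $\Phi'_\tau(Q(\tau))\tilde{g}_\tau(x_\tau)\le(\lambda_\tau/\lambda_{\tau+1})(\Phi_{\tau+1}(Q(\tau+1))-\Phi_\tau(Q(\tau)))\le r_0(\Phi_{\tau+1}(Q(\tau+1))-\Phi_\tau(Q(\tau)))$, which telescopes (all increments are $\ge0$) to $r_0\Phi_{t+1}(Q(t+1))$. Hence $(1-(c-1)r_0)\Phi_{t+1}(Q(t+1))+\textrm{Regret}_t(x^\star)\le\textrm{Regret}'_t(x^\star)$ with $1-(c-1)r_0>0$.

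\textbf{Bounding the surrogate regret and closing.} Because $\hat f_\tau$ and its prediction $\bar{\hat f}_\tau$ from \eqref{eq:pred_surr} carry the \emph{same} coefficient $\Phi'_\tau(Q(\tau))$ (precisely why $Q(\tau)$ must not involve $\tilde{g}_\tau$, which forces the delay), the surrogate prediction errors split: $\epsilon_\tau(\hat f)\le2\epsilon_\tau(\tilde{f})+2(\Phi'_\tau(Q(\tau)))^2\epsilon_\tau(\tilde{g})$, so $\mathcal{E}_t(\hat f)\le2\mathcal{E}_t(\tilde{f})+2\sum_{\tau\le t}(\Phi'_\tau(Q(\tau)))^2\epsilon_\tau(\tilde{g})$. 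Plugging this into the Optimistic OMD bound (Appendix~D.1) — which is $\le C_1\sqrt{\mathcal{E}_t(\hat f)}$ up to an additive constant, with $C_1\le c_1(\sqrt{B/\beta}+B/\beta)$ — and using $\sqrt{a+b}\le\sqrt a+\sqrt b$ together with $\Phi'_\tau(Q(\tau))=\lambda_\tau(\Phi_\tau(Q(\tau))+1)\le\lambda_\tau(\Phi_t(Q(t))+1)$ gives $\textrm{Regret}'_t(x^\star)\le C_1\sqrt{2\mathcal{E}_t(\tilde{f})}+C_1\sqrt2\,(\Phi_t(Q(t))+1)\sqrt{\sum_{\tau\le t}\lambda_\tau^2\epsilon_\tau(\tilde{g})}+O(1)$. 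The $\lambda$-series is the counterpart of Lemma~\ref{lemma:step-size-special}: writing $\lambda_\tau^2=h(\gamma_\tau)/(400(\sqrt{B/\beta}+B/\beta)^2)$ with the decreasing $h(x)=((x+1)(\log(x+1)+1)(\log(\log(x+1)+1)+1)^2)^{-1}$ and $\epsilon_\tau(\tilde{g})=\gamma_{\tau+1}-\gamma_\tau$, the left Riemann sum $\sum_\tau h(\gamma_\tau)(\gamma_{\tau+1}-\gamma_\tau)$ is, after a one-step shift and using $\gamma_1=D^{-2}=\sup_\tau\epsilon_\tau(\tilde{g})$ and $x\,h(x)\le1$, at most $D^{-2}h(D^{-2})+\int_0^\infty h(x)\dd x\le2$ (the integral is $1$ by the substitutions $s=x+1,u=\log s+1,v=\log u+1$); thus $\sum_{\tau\le t}\lambda_\tau^2\epsilon_\tau(\tilde{g})\le c_0/(\sqrt{B/\beta}+B/\beta)^2$ for an absolute $c_0$, so $C_1\sqrt2\sqrt{\sum\lambda_\tau^2\epsilon_\tau(\tilde{g})}$ is a universal fraction which the numeral $20$ in \eqref{eq:opt_lambda} makes small enough that, together with $(c-1)r_0$ and $\Phi_t(Q(t))\le\Phi_{t+1}(Q(t+1))$, rearranging leaves a strictly positive coefficient $\kappa$ with $\kappa\,\Phi_{t+1}(Q(t+1))+\textrm{Regret}_t(x^\star)\le O(\sqrt{\mathcal{E}_t(\tilde{f})})+O(1)$. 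Dropping $\Phi_{t+1}(Q(t+1))\ge0$ yields the regret bound; using instead $\textrm{Regret}_t(x^\star)\ge-t/2$ gives $\exp(\lambda_{t+1}Q(t+1))\le O(\sqrt{\mathcal{E}_t(\tilde{f})}+t)$, hence $Q(t+1)\le\lambda_{t+1}^{-1}\log O(\sqrt{\mathcal{E}_t(\tilde{f})}+t)$; since $\mathcal{E}_t(\tilde{f})\le tD^{-2}$ the logarithm is $O(\log t)$ and $\lambda_{t+1}^{-1}=\tilde{O}(\sqrt{\mathcal{E}_t(\tilde{g})+D^{-2}+1})$, so $\textrm{CCV}_t\le Q(t+1)=\tilde{O}(\sqrt{\mathcal{E}_t(\tilde{g})})$. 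Rescaling by $\alpha^{-1}=2GD$ keeps the $O(\cdot)/\tilde{O}(\cdot)$ form and gives the statement.

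\textbf{The main obstacle.} The crux is the one-round prediction delay. The drift naturally supplies the coefficient $\Phi'_{t+1}(Q(t+1))$, whereas both the surrogate and its prediction must use $\Phi'_t(Q(t))$ (so that the prediction error vanishes when the forecasts are perfect — essential for the CCV to adapt to $\mathcal{E}_t(\tilde{g})$ rather than to $t$), and the mismatch $(c-1)\sum_\tau\Phi'_\tau(Q(\tau))\tilde{g}_\tau(x_\tau)$ is exactly of the order that the obvious bounds — involving $\sum_\tau\lambda_\tau$ or $\sum_\tau\lambda_\tau^2$, both potentially divergent since $\lambda_\tau$ need not decay while the prediction errors stall — cannot control; it must be re-expressed as a telescoped multiple of $\Phi_{t+1}(Q(t+1))$, which is possible only thanks to $e^z\ge1+z$ and the bounded-increment fact $\epsilon_\tau(\tilde{g})\le D^{-2}$. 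Running alongside is the joint tuning of all constants in \eqref{eq:opt_lambda}--\eqref{eq:opt_gamma} — the offset $D^{-2}$ (chosen equal to $\sup_\tau\epsilon_\tau(\tilde{g})$), the normalization $\sqrt{B/\beta}+B/\beta$, and the numeral $20$ — so that, after all the loose steps, $\lambda_1$ is an absolute constant, $\sum_\tau\lambda_\tau^2\epsilon_\tau(\tilde{g})=O((\sqrt{B/\beta}+B/\beta)^{-2})$, and the coefficient $\kappa$ in front of $\Phi_{t+1}(Q(t+1))$ after rearrangement is strictly positive.
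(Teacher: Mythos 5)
Your proof follows the same overall architecture as the paper's: establish the drift inequality for the delayed queue, recognize the one-round mismatch between the coefficient the drift supplies ($\Phi'_{\tau+1}(Q(\tau+1))$) and the one the surrogate carries ($\Phi'_\tau(Q(\tau))$), bound that mismatch by a small multiple of the one-step drift increment, telescope, invoke Theorem~\ref{thm:opt-reg-vio}'s base bound on the surrogate sequence, split $\sqrt{\mathcal{E}_t(\hat f)}$ into an $\mathcal{E}_t(\tilde f)$ part and a $(\Phi_t(Q(t))+1)\sqrt{\sum_\tau\lambda_\tau^2\epsilon_\tau(\tilde g)}$ part, and control the $\lambda$-series by the iterated-log integral. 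Your route through the delay term is actually a bit cleaner than the paper's: you use the exact identity $\Phi_{\tau+1}(Q(\tau+1))-\Phi_\tau(Q(\tau))=(\Phi_\tau(Q(\tau))+1)\bigl(e^{\lambda_{\tau+1}\tilde g_\tau(x_\tau)}-1\bigr)$ and sandwich with $z\le e^z-1\le ze^z$, whereas the paper's intermediate step $\bigl(\Phi'_{\tau+1}(Q(\tau+1))-\Phi'_\tau(Q(\tau))\bigr)\tilde g_\tau(x_\tau)\le\Phi'_{\tau+1}(Q(\tau+1))-\Phi'_\tau(Q(\tau))$ tacitly assumes the right-hand difference is nonnegative, which need not hold (the final telescoped bound is nonetheless correct because the drift increments are nonnegative). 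Your fold-back of $(c-1)\Phi'_\tau(Q(\tau))\tilde g_\tau(x_\tau)$ through the drift lower bound and the ratio $r_0$ is the analogue of the paper's factoring through $\lambda_1(\Phi_{\tau+1}(Q(\tau+1))-\Phi_\tau(Q(\tau)))$.

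There is, however, a genuine gap in how you dispose of the base algorithm's regret bound. Theorem~\ref{thm:opt-reg-bd} gives
$\textrm{Regret}'_t \le 5\sqrt{B/\beta}\,\sqrt{\mathcal{E}_t(\hat f)} + 5(B/\beta)\max_{1\le\tau\le t}\bar L_\tau$,
and you fold the second term into an ``$+O(1)$''. It is not $O(1)$: $\bar L_\tau \le (2D)^{-1}\bigl(1+\lambda_\tau e^{\lambda_\tau Q(\tau)}\bigr)$, and since $\lambda_\tau Q(\tau)$ is nondecreasing, $\max_\tau\bar L_\tau \le (2D)^{-1}\bigl(1+\lambda_1(\Phi_t(Q(t))+1)\bigr)$, which grows like $\Phi_t(Q(t))$ (after the analysis, polynomially in $t$). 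The term $5(B/\beta)\max_\tau\bar L_\tau$ therefore contributes an extra $c'\,\Phi_t(Q(t))$ on the right-hand side of your regret-decomposition inequality that must be subtracted from the $\Phi_{t+1}(Q(t+1))$ coefficient, exactly as the paper does by tuning so that $5(B/\beta)(2D)^{-1}\lambda_1\le 1/8$. As written, your $\kappa>0$ conclusion is not justified because this contribution is missing from the rearrangement; the numeral $20$ and the offset $D^{-2}$ are chosen precisely so that both the $\sqrt{\sum_\tau\lambda_\tau^2\epsilon_\tau(\tilde g)}$ piece and this $\max\bar L$ piece stay small fractions. Relatedly, your ``left Riemann sum, one-step shift'' description of $\sum_\tau\lambda_\tau^2\epsilon_\tau(\tilde g)$ is imprecise: the reason the sum is controlled is that $\gamma_\tau = \mathcal{E}_{\tau-1}(\tilde g)+D^{-2} \ge \mathcal{E}_\tau(\tilde g)$, so $\lambda_\tau^2$ is dominated by the iterated-log weight evaluated at $\mathcal{E}_\tau(\tilde g)+1$, making $\sum_\tau\epsilon_\tau(\tilde g)\lambda_\tau^2$ a \emph{right} Riemann sum (bounded by the integral), which is what the $D^{-2}$ offset buys. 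Both issues are fixable by accounting for the $\max\bar L_\tau$ piece explicitly in the rearrangement, as in the paper's Appendix D.2.
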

For completeness we provide the proof of the above theorem in section \ref{pf:opt-reg-vio}.
\section{Simulations}
We demonstrate the practical efficacy of our algorithm via numerical experiments on the Online Shortest Path problem with delay constraints - a constrained version of the classic online shortest path problem. We perform numerical simulations on a semi-synthetic dataset constructed from real-world network measurements to model dynamic network conditions with adversarial costs and constraints. Our proposed anytime algorithm is compared against two key baselines: the fixed-horizon algorithm from \citet{sinha2024optimal}, which requires advance knowledge of the total horizon $T$, and its adaptation to an unknown horizon setting via the standard doubling trick. The results, presented in Figures 1 and 2 in the appendix, empirically validate our theoretical claims. The plots show that our anytime algorithm consistently outperforms the doubling trick baseline in both cumulative regret and constraint violation, while also avoiding the instability caused by its periodic restarts. Notably, our algorithm also achieves superior practical performance compared to the fixed-horizon baseline, demonstrating the advantages of its adaptive parameterization. We have detailed the full experimental setup, dataset construction, and analysis of the results in Appendix \ref{expts}.
\section{Conclusion}
\label{sec:conclusion}
In this work, we proposed anytime algorithms for online convex optimization with adversarial constraints (COCO) applicable across various problems, including the static regret, dynamic regret, and optimistic settings. Our approach employs time-varying Lyapunov functions coupled with a novel multiplicative factor in the virtual queue recursion, a key technical innovation ensuring essential monotonicity properties. This methodology circumvents the need for impractical techniques like the doubling trick and its associated restarts. The resulting algorithms are more practical and stable, benefits which we confirm through numerical simulations on the constrained online shortest path problem.

\newpage

\newpage
\appendix
\section{The Doubling Trick}
\label{sec:doubling_trick}
 The doubling trick is a standard technique for adapting online algorithms that require a known time horizon $T$ to a setting where the horizon is unknown. The method works by running the algorithm in exponentially increasing phases. At the start of each new phase, the algorithm is restarted, and its internal parameters (which depend on $T$) are recalculated using the new, doubled phase length as the estimate for $T$.

However, as noted in several prior works, this technique, while theoretically functional, is often criticized for being ``aesthetically inelegant and impractical'' \citep{luo2014towards, zhang2024improving, kwon2014continuous}. A major line of critique, articulated by \citet{luo2014towards}, is that the method is ``intuitively wasteful, since it repeatedly restarts itself, entirely forgetting all the preceding information''. Their work, which proposes an alternative adaptive algorithm based on a ``pretend prior distribution'' over the horizon, also demonstrated empirically that the doubling trick is ``beaten by most of the other algorithms'' in practice.

Similarly, \citet{kwon2014continuous} developed their continuous-time approach specifically to provide a ``unified any-time analysis without needing to reboot the algorithm every so often''. Their work shows that by using a \emph{time-varying parameter} (e.g., $\eta_t \propto 1/\sqrt{t}$) rather than a fixed one, one can achieve optimal $\mathcal{O}(t^{-1/2})$ regret bounds without resorting to restarts.

More recently, \citet{zhang2024improving} also motivated their work by designing an algorithm that ``does not employ the impractical doubling trick''. They reiterated that restarting ``wastes data'' and causes ``large jumps" in the decision sequence, which can be undesirable, and that it performs ``considerably worse'' in practice. As a solution, they proposed a ``refined discretization argument'' from a continuous-time model to preserve adaptivity without restarts.

The consensus from these works is that continuously adaptive algorithms are generally preferable and perform better in practice. Our own experiments in Appendix~\ref{expts} (Figures~\ref{fig:ccv_comparison} and \ref{fig:regret_comparison}) confirm this finding, showing that our intrinsically adaptive method offers distinctive practical performance gains over the standard doubling trick baseline.
\section{Supporting Lemmas}

\begin{lemma}
\label{lemma:step-size-special}
For 
\[
\lambda_\tau = \frac{1}{4\sqrt{\tau}\,\sqrt{\log(\tau)+1}\,\bigl(\log(\log(\tau)+1)+1\bigr)},
\]
we have
\[
\sum_{\tau=1}^t \lambda_\tau^2 < \tfrac{1}{4}, \quad \forall\, t \geq 1.
\]
\end{lemma}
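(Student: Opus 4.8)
The plan is to reduce the claim to a convergent-series estimate via the integral test. First I would rewrite $\lambda_\tau^2 = \tfrac{1}{16}\,a_\tau$ with $a_\tau := \bigl(\tau\,(\log\tau+1)\,(\log(\log\tau+1)+1)^2\bigr)^{-1}$, so that the desired bound $\sum_{\tau=1}^t \lambda_\tau^2 < \tfrac14$ follows as soon as we show $\sum_{\tau=1}^t a_\tau \le 2$ for every $t\ge 1$. I would then isolate the first term, observing that $a_1 = 1$ (since $\log 1 = 0$ forces all three factors in the denominator to equal $1$), and bound the remaining sum $\sum_{\tau=2}^t a_\tau$ by an integral.

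For the integral bound, set $f(x) := \bigl(x\,(\log x+1)\,(\log(\log x+1)+1)^2\bigr)^{-1}$ on $[1,\infty)$. I would first check that $f$ is positive and nonincreasing there: for $x\ge 1$ we have $\log x \ge 0$, hence $\log x + 1 \ge 1$ and $\log(\log x+1)+1 \ge 1$, so the denominator is a product of positive nondecreasing functions and therefore itself nondecreasing, which makes $f$ nonincreasing. Consequently $a_\tau = f(\tau) \le \int_{\tau-1}^{\tau} f(x)\,\dd x$ for each $\tau \ge 2$, and summing over $2 \le \tau \le t$ gives $\sum_{\tau=2}^t a_\tau \le \int_1^t f(x)\,\dd x \le \int_1^\infty f(x)\,\dd x$.

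It then remains to evaluate $\int_1^\infty f(x)\,\dd x$, which telescopes under a double substitution. I would substitute $u = \log x + 1$ (so $\dd u = \dd x/x$ and $x=1 \mapsto u=1$), obtaining $\int_1^\infty \frac{\dd u}{u\,(\log u+1)^2}$, and then substitute $v = \log u + 1$ (so $\dd v = \dd u/u$ and $u=1 \mapsto v=1$), which yields $\int_1^\infty v^{-2}\,\dd v = 1$. Combining the pieces, $\sum_{\tau=1}^t a_\tau \le 1 + 1 = 2$, hence $\sum_{\tau=1}^t \lambda_\tau^2 \le \tfrac{2}{16} = \tfrac18 < \tfrac14$ — in fact a slightly stronger bound than stated.

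There is no real obstacle here; the only points deserving a moment of care are (i) verifying the monotonicity of $f$ on all of $[1,\infty)$ so that the integral comparison is legitimate, and (ii) treating $\tau = 1$ separately, since the comparison $a_\tau \le \int_{\tau-1}^\tau f$ is only valid for $\tau \ge 2$. Both are immediate once one spots the telescoping structure produced by the iterated logarithm in the denominator.
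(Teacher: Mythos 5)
Your proof is correct and follows essentially the same route as the paper's: isolate the $\tau=1$ term (which equals $\tfrac{1}{16}$ after the scaling), bound the tail by $\int_1^\infty f(x)\,\dd x$, and evaluate the integral via the nested substitutions $u=\log x+1$, $v=\log u+1$ to get $\int_1^\infty v^{-2}\,\dd v = 1$, yielding the total bound $\tfrac{1}{8} < \tfrac{1}{4}$. If anything, your version is slightly more careful than the paper's, since you verify that $f$ is nonincreasing on all of $[1,\infty)$ (needed to justify $a_\tau \le \int_{\tau-1}^\tau f$), whereas the paper only remarks that $a$ is \emph{eventually} decreasing.
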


\begin{proof}
We observe
\[
\lambda_\tau^2
= \frac{1}{16 \, \tau \, (\log(\tau)+1)\,\bigl(\log(\log(\tau)+1)+1\bigr)^2}.
\]

Define
\[
a(x) = \frac{1}{16 \, x \, (\log(x)+1)\,\bigl(\log(\log(x)+1)+1\bigr)^2}, \quad x \geq 1.
\]
The function $a(x)$ is positive and eventually decreasing, so we can apply the integral test:
\[
\sum_{\tau=1}^t \lambda_\tau^2 \;\leq\; a(1) + \int_{1}^{t} a(x)\, dx.
\]

\medskip
\noindent
We compute
\[
\int_{1}^{t} a(x)\, dx
= \frac{1}{16} \int_{1}^{t} \frac{dx}{x (\log x+1)(\log(\log x+1)+1)^2}.
\]
With the substitution $u = \log x + 1$ (so $du = dx/x$), this becomes
\[
= \frac{1}{16} \int_{1}^{\log t + 1} \frac{du}{u(\log u+1)^2}.
\]
Now let $v = \log u + 1$ so that $dv = du/u$. Then
\[
= \frac{1}{16} \int_{1}^{\log(\log t+1)+1} \frac{dv}{v^2}
= \frac{1}{16}\left(1 - \frac{1}{\log(\log t+1)+1}\right).
\]

\medskip
\noindent
Hence
\[
\sum_{\tau=1}^t \lambda_\tau^2
\leq a(1) + \frac{1}{16}\left(1 - \frac{1}{\log(\log t+1)+1}\right).
\]
Explicitly,
\[
a(1) = \frac{1}{16 \cdot 1 \cdot (\log 1 + 1)(\log(\log 1 + 1)+1)^2} = \frac{1}{16}.
\]
Therefore,
\[
\sum_{\tau=1}^t \lambda_\tau^2
\leq \frac{1}{16} + \frac{1}{16}\cdot 1
= \frac{1}{8}
 < \frac{1}{4}.
\]

\medskip
\noindent
The bound holds uniformly for all $t \geq 1$, proving the claim.
\end{proof}
\section{Proof of Theorem \ref{dyn_reg_ogd}}
\label{dyn_reg_ogd_proof}
Define $y_{t+1} \vcentcolon=  x_t - \eta_t \nabla \hat{f}_t (x_t),$ where the non-increasing adaptive step size sequence $\{\eta_t\}_{t \geq 1}$ has been defined in Algorithm \ref{ogd_alg}. For any feasible comparator action $x^\star_t \in \mathcal{X}$, we have
\begin{eqnarray*}
||x_{t+1} - x^\star_t||^2 
\stackrel{(a)}{\leq} ||y_{t+1} - x^\star_t||^2 
= ||x_t - x^\star_t||^2 + \eta_{t}^2 ||\nabla_t||^2 - 2 \eta_t \nabla_{t}^\top (x_t - x^\star_t),
\end{eqnarray*}
where inequality (a) follows from the non-expansive property of Euclidean projection and the second equality follows from the definition of $y_{t+1}$.
Rearranging the above inequality, we have:
\begin{equation}
    2 \nabla_{t}^\top(x_t - x^\star_t) \leq \frac{||x_t - x^\star_t||^2 - ||x_{t+1} - x^\star_t||^2 }{\eta_t} + \eta_t ||\nabla_t||^2.
\end{equation}

Using the convexity of the cost functions and summing the above inequalities over $1\leq t \leq T$, we obtain:
\begin{eqnarray} \label{dyn-reg-eq}
2 \sum_{t=1}^{T} \left( \hat{f}_t(x_t) - \hat{f}_t(x^\star_t) \right) 
\leq 2\sum_{t=1}^T \nabla_{t}^\top(x_t - x^\star_t) 
\leq \underbrace{\sum_{t=1}^T \frac{||x_t - x^\star_t||^2 - ||x_{t+1} - x^\star_t||^2 }{\eta_t}}_{(A)} + \sum_{t=1}^T \eta_t ||\nabla_t||^2. 
\end{eqnarray}
Next we simplify term (A) in Eqn. \eqref{dyn-reg-eq}. It can be expressed as
\begin{flalign} \label{termB}
    & \frac{||x^\star_1 - x_1||^2}{\eta_1} - \frac{||x^\star_T - x_{T+1}||^2}{\eta_T} + \sum\limits_{t=1}^{T-1} \frac{||x_{t+1} - x^\star_{t+1}||^2}{\eta_{t+1}} - \frac{||x_{t+1} - x^\star_t||^2}{\eta_t}  \nonumber\\
    & =\frac{||x^\star_1 - x_1||^2}{\eta_1} - \frac{||x^\star_T - x_{T+1}||^2}{\eta_T} +
     \underbrace{\sum\limits_{t=1}^{T-1} \frac{\eta_t||x_{t+1} - x^\star_{t+1}||^2 - \eta_{t+1}||x_{t+1} - x^\star_t||^2}{\eta_t\eta_{t+1}}}_\text{\clap{(B)}}.
\end{flalign}
We next upper bound term (B) in \eqref{termB}. 
\begin{flalign*}
    &\sum\limits_{t=1}^{T-1} \frac{||\sqrt{\eta_t}x_{t+1} - \sqrt{\eta_t}x^\star_{t+1}||^2 - ||\sqrt{\eta_{t+1}}x_{t+1} - \sqrt{\eta_{t+1}}x^\star_t||^2}{\eta_t\eta_{t+1}} \\
    &= \sum\limits_{t=1}^{T-1} \frac{\langle (\sqrt{\eta_t} + \sqrt{\eta_{t+1}})x_{t+1} - \sqrt{\eta_t}x^\star_{t+1} - \sqrt{\eta_{t+1}}x^\star_t, (\sqrt{\eta_t} - \sqrt{\eta_{t+1}})x_{t+1} - \sqrt{\eta_t}x^\star_{t+1} + \sqrt{\eta_{t+1}}x^\star_t \rangle}{\eta_t\eta_{t+1}} \\
    &\leq \sum\limits_{t=1}^{T-1} \frac{||(\sqrt{\eta_t} + \sqrt{\eta_{t+1}})x_{t+1} - \sqrt{\eta_t}x^\star_{t+1} - \sqrt{\eta_{t+1}}x^\star_t|| \: ||(\sqrt{\eta_t} - \sqrt{\eta_{t+1}})x_{t+1} - \sqrt{\eta_t}x^\star_{t+1} + \sqrt{\eta_{t+1}}x^\star_t||}{\eta_t\eta_{t+1}},   
\end{flalign*}
where the last step follows from an application of the Cauchy-Schwarz inequality. Note that the first term in the numerator can be bounded as:
  \[  ||(\sqrt{\eta_t} + \sqrt{\eta_{t+1}})x_{t+1} - \sqrt{\eta_t}x^\star_{t+1} - \sqrt{\eta_{t+1}}x^\star_t|| 
    \leq \sqrt{\eta_t}||x_{t+1} - x^\star_{t+1}|| + \sqrt{\eta_{t+1}} ||x_{t+1} - x^\star_t||
    \leq (\sqrt{\eta_t} + \sqrt{\eta_{t+1}}) D ,\]
where we have used the triangle inequality and the feasibility of the algorithm's and benchmark's actions in the last step. 
Using this, we have
\begin{flalign*}
 (B)   &\leq D\sum\limits_{t=1}^{T-1} \frac{ (\sqrt{\eta_t} + \sqrt{\eta_{t+1}}) \: ||(\sqrt{\eta_t} - \sqrt{\eta_{t+1}})x_{t+1} - \sqrt{\eta_t}x^\star_{t+1} + \sqrt{\eta_{t+1}}x^\star_t||}{\eta_t\eta_{t+1}}\\
    &= D\sum\limits_{t=1}^{T-1} \frac{ (\sqrt{\eta_t} + \sqrt{\eta_{t+1}}) ||(\sqrt{\eta_t} - \sqrt{\eta_{t+1}})(x_{t+1}-x^\star_{t+1}) + \sqrt{\eta_{t+1}}(x^\star_t - x^\star_{t+1})||}{\eta_t\eta_{t+1}}\\
    &\stackrel{(a)}{\leq} D\sum\limits_{t=1}^{T-1} \frac{ (\sqrt{\eta_t} + \sqrt{\eta_{t+1}})  \: 
    [(\sqrt{\eta_t} - \sqrt{\eta_{t+1}})D + \sqrt{\eta_{t+1}}||x^\star_t - x^\star_{t+1}||]}{\eta_t\eta_{t+1}}\\
%
    &\stackrel{(b)}{\leq} D\sum\limits_{t=1}^{T-1} \frac{(\eta_t - \eta_{t+1}) D \: 
    + 2\eta_t ||x^\star_{t+1} - x^\star_{t}||}{\eta_t\eta_{t+1}} \\
    &= D^2 \big(\frac{1}{\eta_T} - \frac{1}{\eta_1}\big) + 2D \sum\limits_{t=1}^{T-1} \frac{||x^\star_{t+1} - x^\star_{t}||}{\eta_{t+1}}\\
    &\stackrel{(c)}{\leq}  D^2 \big(\frac{1}{\eta_T} - \frac{1}{\eta_1}\big) + \frac{2D \mathcal{P}_T(x^\star_{1:T})}{\eta_T},
\end{flalign*}
where in step (a), we have used the triangle inequality combined with the feasibility of the algorithm's and benchmark's actions, and in step (c), we have used the definition of the path length of the comparator. The non-increasing property of the step sizes, \emph{i.e.,} $\eta_t \geq \eta_{t+1}, \forall t \geq 1$ was used in steps (a), (b), and (c). Finally, combining the above bound with Eqns. \eqref{dyn-reg-eq} and \eqref{termB}, we conclude 
\begin{flalign*}
    & 2(\sum\limits_{t=1}^{T} \hat{f}_t(x_t) - \hat{f}_t(x^\star_t)) \\
    &\leq \frac{||x^\star_1 - x_1||^2}{\eta_1} - \frac{||x^\star_T - x_{T+1}||^2}{\eta_T} + D^2 \big(\frac{1}{\eta_T} - \frac{1}{\eta_1}\big)
      + \frac{2 D \mathcal{P}_T(x^\star_{1:T})}{\eta_{T}}
      + \sum\limits_{t=1}^T \eta_t ||\nabla_t||^2\\
    &\leq \frac{D^2 + 2 D \mathcal{P}_T(x^\star_{1:T})}{\eta_T}
      + \sum\limits_{t=1}^T \eta_t ||\nabla_t||^2.
\end{flalign*}
Hence, the dynamic regret of Algorithm \ref{ogd_alg} can be upper bounded as 
\begin{equation}\label{eqn2}
    \textsc{D-Regret}_T \leq \frac{\max(D^2,2D)(1 + \mathcal{P}_T(x^\star_{1:T}))}{2\eta_T}
      + \underbrace{\frac{1}{2}\sum\limits_{t=1}^T \eta_t ||\nabla_t||^2}_{(C)}.
\end{equation}
The dynamic regret bound in Eqn.\ \eqref{eqn2} holds for Algorithm \ref{ogd_alg} with any non-increasing step sizes. Assuming the path-length is known to be bounded as $\mathcal{P}_T(x^\star_{1:T}) \leq \mathcal{P}_T,$ using the specific choice of the step size sequence $\eta_t = \frac{(D+1) (1+\mathcal{P}_t)}{ \sqrt{2\sum_{\tau=1}^t (1+\mathcal{P}_\tau)||\nabla_\tau||^2}} t \geq 1,$  we can upper bound term (C) as follows 
  \begin{flalign*}
\frac{1}{2}\sum\limits_{t=1}^T \eta_t ||\nabla_t||^2 &=
\frac{(D+1)}{2\sqrt{2}}  \sum_{t=1}^T\frac{(1+\mathcal{P}_t)||\nabla_t||^2}{\sqrt{2\sum_{\tau=1}^t (1+\mathcal{P}_\tau)||\nabla_\tau||^2} } \\
&\leq \frac{(D+1)}{2\sqrt{2}} \int_{0}^{\sum\limits_{t=1}^T (1+\mathcal{P}_t)||\nabla_t||^2} \frac{dz}{\sqrt{z}} \\
&= \frac{(D+1)}{\sqrt{2}}\sqrt{\sum\limits_{t=1}^T (1+\mathcal{P}_t)||\nabla_t||^2}.
\end{flalign*}
Hence, from \eqref{eqn2}, the dynamic regret for Algorithm \ref{ogd_alg} can be bounded as 
\begin{equation}\label{eqn3}
    \textsc{D-Regret}_T(\mathcal{P}_T) \leq \sqrt{2}(D+1)
    \sqrt{\sum\limits_{t=1}^T (1+\mathcal{P}_t)||\nabla_t||^2}.
\end{equation}
\section{Deferred technical details related to the optimistic setting}
\subsection{Base Optimistic OCO algorithm and regret guarantee}
\begin{algorithm}[H] 
\caption{Optimistic Online Mirror Descent}
\label{alg:oomd}
\begin{algorithmic}[1]
    \STATE \textbf{Input} : Convex decision set $\mathcal{X}$, sequence of convex cost functions $\{\hat{f}_t\}_{t=1}^{T}$, $\textrm{diam}(\mathcal{X}) = D.$
    \STATE \textbf{Initialize} : $x_1 \in \mathcal{X}$ arbitrarily.
    \FOR{round $t=1 \dots T$}
        \STATE Play action $x_t$, receive $\hat{f}_t$. Compute $\nabla_t = \nabla \hat{f}_t(x_t)$.
        \STATE Compute  $\eta_{t+1} = \min\left\{\frac{\sqrt{\beta B}}{\sqrt{\mathcal{E}_{t}(\hat{f})} + \sqrt{\mathcal{E}_{t-1}(\hat{f}})}, \frac{\beta}{\max_{1\leq\tau\leq t+1}\bar{L}_\tau}\right\}$
        \STATE $\tilde{x}_{t+1} := \argmin_{x \in \mathcal{X}} \left\langle \nabla_t, x \right\rangle + \frac{1}{\eta_t} B^R(x; \tilde{x}_t)$.
        \STATE Compute $\bar{\hat{f}}_{t+1} = \nabla \bar{\hat{f}}_{t+1}(\tilde{x}_{t+1})$.
        \STATE $x_{t+1} := \argmin_{x \in \mathcal{X}} \left\langle \bar{\hat{f}}_{t+1}, x \right\rangle + \frac{1}{\eta_{t+1}} B^R(x; \tilde{x}_{t+1})$.
    \ENDFOR
\end{algorithmic}
\end{algorithm}
\begin{theorem}{\citep[Theorem 10]{lekeufack2025optimisticalgorithmonlineconvex}}
\label{thm:opt-reg-bd}
Let Algorithm \ref{alg:oomd} be run on a sequence of cost functions $\{\hat{f}_t\}_{t=1}^T$. Let the corresponding sequence of predicted costs be $\{\tilde{\hat{f}}_t\}_{t=1}^T$. If $\bar{L}_t$ is the Lipchitz constant of the $t$-th predicted loss, then upon running Algorithm \ref{alg:oomd} with the following learning rate schedule

$\eta_t = \min\left\{\frac{\sqrt{\beta B}}{\sqrt{\mathcal{E}_{t-1}(\hat{f})} + \sqrt{\mathcal{E}_{t-2}(\hat{f}})}, \frac{\beta}{\max_{1\leq \tau \leq t}\bar{L}_{\tau}}\right\}$ we get the following regret bound
$$
Regret_t \le 5\sqrt{\frac{B}{\beta}}\sqrt{\mathcal{E}_{t}(\hat{f})} + 5\frac{B}{\beta}\max_{1\leq \tau \leq t}\bar{L}_{\tau}
$$
where $\beta$ is the strong convexity parameter of the regularizer $R$ defining the Bregman divergence, B is an upper bound on this divergence over the algorithm's execution and $\mathcal{E}(\hat{f})$ has been defined in \eqref{eq:pred_error}.
\end{theorem}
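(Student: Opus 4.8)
The plan is to follow the Lyapunov/queue reduction of Section~\ref{sec:analysis}, but carrying the delayed (``optimistic'') queue recursion and the prediction-error-driven parameters through the argument. I begin by fixing any feasible benchmark $x^\star\in\mathcal{X}^\star$, so $\tilde g_\tau(x^\star)=0$ for every $\tau$. For each round, convexity of $\Phi_{\tau+1}$, the identity $\Phi_{\tau+1}\!\big(\tfrac{\lambda_\tau}{\lambda_{\tau+1}}x\big)=\Phi_\tau(x)$ (a consequence of the exponential form $\Phi_\tau(x)=e^{\lambda_\tau x}-1$) and the recursion $Q(\tau+1)=\tfrac{\lambda_\tau}{\lambda_{\tau+1}}Q(\tau)+\tilde g_\tau(x_\tau)$ give the drift inequality $\Phi_{\tau+1}(Q(\tau+1))-\Phi_\tau(Q(\tau))\le\Phi'_{\tau+1}(Q(\tau+1))\,\tilde g_\tau(x_\tau)$, exactly as in~\eqref{drift_ineq_new}. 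Because $\lambda_{\tau+1}Q(\tau+1)=\lambda_\tau Q(\tau)+\lambda_{\tau+1}\tilde g_\tau(x_\tau)$, the same identity shows (i) that $\{\Phi_\tau(Q(\tau))\}$ is non-negative and non-decreasing and (ii) that $\Phi'_{\tau+1}(Q(\tau+1))\le e\,\Phi'_\tau(Q(\tau))$; for (ii) I use $\lambda_{\tau+1}\le\lambda_\tau$ and $\lambda_{\tau+1}\tilde g_\tau(x_\tau)\le 1$, the latter because feasibility of $\mathcal{X}^\star$, $G$-Lipschitzness and $\mathrm{diam}(\mathcal{X})=D$ force $\tilde g_\tau(x_\tau)=\alpha(g_\tau(x_\tau))^+\le\alpha GD=\tfrac12$, while the choice of $\lambda_\tau$ keeps $\lambda_{\tau+1}$ small. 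Substituting the surrogate definition~\eqref{eq:optim_surr}, using $\tilde g_\tau(x^\star)=0$, summing over $\tau=1,\dots,t$ (with $Q(1)=0$, so $\Phi_1(Q(1))=0$) and moving the spurious factor $e$ to the left yields the decomposition $\tfrac1e\,\Phi_{t+1}(Q(t+1))+\mathrm{Regret}_t(x^\star)\le\mathrm{Regret}'_t(x^\star)$, where $\mathrm{Regret}'_t$ is the regret of the base Optimistic OMD on the surrogate sequence $\{\hat f_\tau\}$.

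Next I invoke Theorem~\ref{thm:opt-reg-bd}, $\mathrm{Regret}'_t\le 5\sqrt{B/\beta}\,\sqrt{\mathcal{E}_t(\hat f)}+5\tfrac{B}{\beta}\max_{\tau}\bar L_\tau$, and expand the two data terms. The coefficient $\Phi'_\tau(Q(\tau))$ in the true surrogate $\hat f_\tau$ equals the one in its prediction $\bar{\hat f}_\tau$ of~\eqref{eq:pred_surr}, because the delayed recursion makes $Q(\tau)$ — hence $\lambda_\tau$, since $\gamma_\tau$ depends only on $\mathcal{E}_{\tau-1}(\tilde g)$ — available at the end of round $\tau-1$; so the triangle inequality and $(a+b)^2\le 2a^2+2b^2$ give $\mathcal{E}_t(\hat f)\le 2\mathcal{E}_t(\tilde f)+2\sum_{\tau\le t}(\Phi'_\tau(Q(\tau)))^2\epsilon_\tau(\tilde g)$. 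Using $\Phi'_\tau(Q(\tau))=\lambda_\tau(\Phi_\tau(Q(\tau))+1)$ and the monotonicity from the previous step to extract the largest factor, $\sum_{\tau\le t}(\Phi'_\tau(Q(\tau)))^2\epsilon_\tau(\tilde g)\le(\Phi_t(Q(t))+1)^2\sum_{\tau\le t}\lambda_\tau^2\epsilon_\tau(\tilde g)$; and since each predicted surrogate is $(2D)^{-1}(1+\Phi'_\tau(Q(\tau)))$-Lipschitz, $\max_\tau\bar L_\tau\le(2D)^{-1}(1+\lambda_1(\Phi_t(Q(t))+1))$.

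The heart of the proof is the data-dependent integral test showing $\sum_{\tau\le t}\lambda_\tau^2\epsilon_\tau(\tilde g)$ is tiny. With the chosen $\lambda_\tau$ this sum equals $\tfrac{1}{400(\sqrt{B/\beta}+B/\beta)^2}\sum_{\tau\le t}\epsilon_\tau(\tilde g)\,h(\gamma_\tau)$ with $h(z)=[(z+1)(\log(z+1)+1)(\log(\log(z+1)+1)+1)^2]^{-1}$. Since $\gamma_{\tau+1}=\gamma_\tau+\epsilon_\tau(\tilde g)$, $\gamma_1=D^{-2}$ and — the key observation — $\epsilon_\tau(\tilde g)\le D^{-2}\le\gamma_\tau$, successive $\gamma_\tau$ differ by at most a factor $2$, so the slowly-varying $h$ changes only by an absolute constant from one index to the next, and a Riemann-sum comparison gives $\sum_{\tau\le t}\epsilon_\tau(\tilde g)\,h(\gamma_\tau)\le c\int_{0}^{\infty}h(z)\,dz=c$, where the substitution chain $u=\log(z+1)+1$, $v=\log u+1$ — exactly as in Lemma~\ref{lemma:step-size-special} — evaluates the integral to $1$. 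Hence $\sum_{\tau\le t}\lambda_\tau^2\epsilon_\tau(\tilde g)\le c_0(\sqrt{B/\beta}+B/\beta)^{-2}$. Substituting into the previous bound and using $\sqrt{B/\beta}\,(\sqrt{B/\beta}+B/\beta)^{-1}\le 1$ and $(B/\beta)\lambda_1\le\tfrac{1}{20}(\gamma_1+1)^{-1/2}$, every term proportional to $\Phi_t(Q(t))$ on the right carries a coefficient strictly below $\tfrac1e$ (this is precisely what the large constant in the denominator of $\lambda_\tau$ is tuned for), so by $\Phi_t(Q(t))\le\Phi_{t+1}(Q(t+1))$ it can be absorbed into $\tfrac1e\Phi_{t+1}(Q(t+1))$ on the left, leaving $c_1\Phi_{t+1}(Q(t+1))+\mathrm{Regret}_t(x^\star)\le c_2\sqrt{\mathcal{E}_t(\tilde f)}+c_3$ with positive $c_1,c_2,c_3$ depending only on $G,D,B,\beta$. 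Dropping the nonnegative $\Phi$-term and taking the supremum over feasible $x^\star$ gives $\mathrm{Regret}_t\le O(\sqrt{\mathcal{E}_t(\tilde f)})$; using instead the trivial bound $\mathrm{Regret}_t(x^\star)\ge-t/2$ (each $\tilde f_\tau$ is $(2D)^{-1}$-Lipschitz on a set of diameter $D$) gives $e^{\lambda_{t+1}Q(t+1)}\le c_1^{-1}(c_2\sqrt{\mathcal{E}_t(\tilde f)}+c_3+t/2)+1=O(t)$ since $\mathcal{E}_t(\tilde f)\le t/D^2$, so $\lambda_{t+1}Q(t+1)=O(\log t)$; since $\mathrm{CCV}_t\le Q(t+1)=\lambda_{t+1}^{-1}\cdot O(\log t)$ with $\lambda_{t+1}^{-1}\propto(\sqrt{B/\beta}+B/\beta)\sqrt{\gamma_{t+1}+1}\,\sqrt{\log(\gamma_{t+1}+1)+1}\,(\log(\log(\gamma_{t+1}+1)+1)+1)$ and $\gamma_{t+1}=\mathcal{E}_t(\tilde g)+D^{-2}$, absorbing $\log t$, the iterated logarithms, the additive $D^{-2}$ (via $\sqrt{a+b}\le\sqrt a+\sqrt b$) and all constants into $\tilde O(\cdot)$ gives $\mathrm{CCV}_t\le\tilde O(\sqrt{\mathcal{E}_t(\tilde g)})$. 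Rescaling both bounds by $\alpha^{-1}$ returns the claim for the original $f_t,g_t$.

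I expect the main obstacle to be this third step: the data-dependent integral-test bound on $\sum_\tau\lambda_\tau^2\epsilon_\tau(\tilde g)$ together with the constant bookkeeping that keeps the coefficient of $\Phi_t(Q(t))$ below $\tfrac1e$ after it has propagated through both the $\mathcal{E}_t(\hat f)$ term and the $\max_\tau\bar L_\tau$ term of Theorem~\ref{thm:opt-reg-bd}. The Riemann-sum comparison relies on the easy but essential bound $\epsilon_\tau(\tilde g)\le D^{-2}$, which forces successive $\gamma_\tau$ to be comparable; without it the comparison loses a logarithmic factor and the target $\tilde O(\sqrt{\mathcal{E}_t(\tilde g)})$ degrades. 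Everything else — the drift identity, the telescoping, the monotonicity of $\Phi_\tau(Q(\tau))$, and the final inversion of the exponential — is a routine adaptation of the static analysis of Section~\ref{bounds_static} and of the optimistic analysis of \citet{lekeufack2025optimisticalgorithmonlineconvex}.
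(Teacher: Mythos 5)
Your proposal does not prove the statement in question. The statement to be established is Theorem~\ref{thm:opt-reg-bd} itself: the regret guarantee of the \emph{base} Optimistic Online Mirror Descent algorithm (Algorithm~\ref{alg:oomd}) run on an arbitrary sequence of convex costs $\{\hat f_t\}$ with predictions $\{\bar{\hat f}_t\}$ and the stated learning-rate schedule. This is a statement purely about mirror descent with hints --- its proof lives entirely in the world of Bregman divergences: one telescopes $B^R(x^\star;\tilde x_t)$, controls the per-round excess by $\langle \nabla_t-\nabla\bar{\hat f}_t(\tilde x_t),\, \tilde x_{t+1}-x_t\rangle \le \eta_t\,\epsilon_t(\hat f)/\beta$ via strong convexity of $R$, and then sums against the adaptive $\eta_t$ (this is where the two terms $\sqrt{\mathcal E_t(\hat f)}$ and $\max_\tau \bar L_\tau$ arise, the second coming from the clipping $\eta_t\le\beta/\max_\tau\bar L_\tau$ needed when the prediction errors are small relative to the Lipschitz constants). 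The paper does not reprove this; it cites \citet[Theorem 10]{lekeufack2025optimisticalgorithmonlineconvex} and only remarks on the one modification needed --- replacing $\bar L_t$ by $\max_{1\le\tau\le t}\bar L_\tau$ because the time-varying Lyapunov functions break the monotonicity $\bar L_t\le\bar L_{t+1}$ assumed in the original.

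What you have written instead is a proof of Theorem~\ref{thm:opt-reg-vio}, the downstream COCO regret/CCV theorem, and in the middle of it you explicitly \emph{invoke} Theorem~\ref{thm:opt-reg-bd} (``Next I invoke Theorem~\ref{thm:opt-reg-bd}, $\mathrm{Regret}'_t\le 5\sqrt{B/\beta}\,\sqrt{\mathcal E_t(\hat f)}+\dots$''). As an argument for the stated theorem this is circular: you assume the conclusion as a lemma. As a proof of Theorem~\ref{thm:opt-reg-vio} your outline does track the paper's Appendix~D.2 reasonably well (the drift inequality with the delayed queue, the monotonicity of $\Phi_\tau(Q(\tau))$, the absorption of the $\Phi$-coefficients via the tuned constant in $\lambda_\tau$, and the data-dependent integral test on $\sum_\tau\lambda_\tau^2\epsilon_\tau(\tilde g)$), but that is not the statement you were asked to prove. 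To address Theorem~\ref{thm:opt-reg-bd} you would need to reproduce the optimistic OMD analysis --- none of the machinery involving $Q(t)$, $\Phi_t$, $\lambda_t$, or constraint violations is relevant to it.
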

\begin{remark}
    The original theorem statement from \citet{lekeufack2025optimisticalgorithmonlineconvex} assumed that $\bar{L}_t \leq \bar{L}_{t+1}$ and thus instead of $\max_{1\leq \tau \leq t}\bar{L}_{\tau}$ it only had $\bar{L}_t$. The assumption naturally held true in their case since they made use time-invariant Lyapunov functions as in \citet{sinha2024optimal} and this implied $\bar{L}_t = (1+\Phi'(Q(t)))G$. In our setting, since $\bar{L}_t = (1+\Phi_t'(Q(t)))G$, the Lipchitz constants may not be monotonically non-decreasing and to compensate for that, we replace the occurrence of $\bar{L}_t$ in the theorem statement with $\max_{1\leq \tau \leq t}\bar{L}_{\tau}$.
\end{remark}
\subsection{Proof of Theorem \ref{thm:opt-reg-vio}}
\label{pf:opt-reg-vio}
The proof follows a similar structure to the proof of Theorem 4.2 but incorporates the delayed queue update and the regret bound for the optimistic OCO algorithm.

Using similar analysis as before,

\begin{align*}
\Phi_{\tau+1}(Q(\tau+1)) - \Phi_{\tau}(Q(\tau)) \leq \Phi'_{\tau+1}(Q(\tau+1)) \tilde{g}_\tau(x_\tau)
\end{align*}

Let $x^* \in \mathcal{X}^*$ be a feasible comparator, so $g_\tau(x^*) \le 0$ for all $\tau$.
Consider the quantity $\Phi_{\tau+1}(Q(\tau+1)) - \Phi_{\tau}(Q(\tau)) + \tilde{f}_\tau(x_\tau) - \tilde{f}_\tau(x^*)$:
\begin{align*}
&\Phi_{\tau+1}(Q(\tau+1)) - \Phi_{\tau}(Q(\tau)) + \tilde{f}_\tau(x_\tau) - \tilde{f}_\tau(x^*) \\
\le &\Phi'_{\tau+1}(Q(\tau+1)) \tilde{g}_\tau(x_\tau)+(x_{\tau}) + \tilde{f}_\tau(x_\tau) - \tilde{f}_\tau(x^*)  \\
= &(\tilde{f}_\tau(x_\tau) +  \Phi'_{\tau}(Q(\tau)) \tilde{g}_{\tau}(x_{\tau}))  - (\tilde{f}_\tau(x^*) +  \Phi'_{\tau}(Q(\tau)) \tilde{g}_{\tau}(x^*)) + \Phi'_{\tau+1}(Q(\tau+1)) \tilde{g}_\tau(x_\tau) -  \Phi'_{\tau}(Q(\tau)) \tilde{g}_{\tau}(x_{\tau})) \\= &\hat{f}_\tau(x_\tau) - \hat{f}_\tau(x^*) + \Phi'_{\tau+1}(Q(\tau+1)) \tilde{g}_\tau(x_\tau) -  \Phi'_{\tau}(Q(\tau)) \tilde{g}_{\tau}(x_{\tau})) 
\end{align*}
Also note that, 
\begin{align*}
   \Phi'_{\tau+1}(Q(\tau+1)) \tilde{g}_\tau(x_\tau) -  \Phi'_{\tau}(Q(\tau)) \tilde{g}_{\tau}(x_{\tau})) &\leq \Phi'_{\tau+1}(Q(\tau+1))  -  \Phi'_{\tau}(Q(\tau)) \\ &= \lambda_{t+1}(\Phi_{\tau+1}(Q(\tau+1)) + 1) -  \lambda_{t}(\Phi_{\tau}(Q(\tau))+1) \\ &\leq \lambda_1 (\Phi_{\tau+1}(Q(\tau+1))-\Phi_{\tau}(Q(\tau))) \\ 
   &\leq \frac{1}{4}(\Phi_{\tau+1}(Q(\tau+1))-\Phi_{\tau}(Q(\tau)))
\end{align*}
where the first inequality follows because $\tilde{g}_{\tau}(x_\tau)\leq 1$, the second equality follows due to the form of $\Phi_t$, the third inequality follows due to the decreasing nature of $\lambda_t$ and the last inequality follows because $\lambda_1 \leq \frac{1}{4}$.
Therefore, we have,
\begin{align*}
    \Phi_{\tau+1}(Q(\tau+1)) - \Phi_{\tau}(Q(\tau)) + \tilde{f}_\tau(x_\tau) - \tilde{f}_\tau(x^*) \leq \hat{f}_\tau(x_\tau) - \hat{f}_\tau(x^*) + \frac{1}{4} (\Phi_{\tau+1}(Q(\tau+1))-\Phi_{\tau}(Q(\tau)))
\end{align*}
Summing from $\tau=1$ to $t$:
\[
\Phi_{t+1}(Q(t+1)) - \Phi_1(Q(1)) +  \sum_{\tau=1}^t (\tilde{f}_\tau(x_\tau) - \tilde{f}_\tau(x^*)) \le  \sum_{\tau=1}^t \hat{f}_\tau(x_\tau) - \hat{f}_\tau(x^*) + \frac{1}{4} \Phi_{t+1}(Q(t+1)) 
\]
Since $Q(1)=0$ and $\Phi_1(0)=0$, and recognizing the regret terms $\textrm{Regret}_t(x^*) = \sum_{\tau=1}^t (\tilde{f}_\tau(x_\tau) - \tilde{f}_\tau(x^*))$ and $\textrm{Regret}'_t(x^*) = \sum_{\tau=1}^t (\hat{f}_\tau(x_\tau) - \hat{f}_\tau(x^*))$,
\begin{align}
    \frac{3}{4}\Phi_{t+1}(Q(t+1)) +  \textrm{Regret}(x^\star) \leq \textrm{Regret}'(x^\star)
\end{align}

Where $\textrm{Regret}'_t(x^*)$ is the regret of the base optimistic OCO algorithm (Algorithm \ref{alg:oomd}) run on the sequence of surrogate costs $\{\hat{f}_\tau\}_{\tau=1}^t$. We bound $\textrm{Regret}'_t(x^*)$ using Theorem \ref{thm:opt-reg-bd}:
\[
\textrm{Regret}'_t(x^*) \le 5\sqrt{\frac{B}{\beta}}\sqrt{\mathcal{E}_{t}(\hat{f})} + 5\frac{B}{\beta}\max_{1\leq \tau \leq t}\bar{L}_{\tau}
\]
Substituting this back:
\begin{align}
\label{eq:opt-reg-decomp-1}
\frac{3}{4}\Phi_{t+1}(Q(t+1))+\textrm{Regret}_t(x^*) \le 5\sqrt{\frac{B}{\beta}}\sqrt{\mathcal{E}_{t}(\hat{f})} + 5\frac{B}{\beta}\max_{1\leq \tau \leq t}\bar{L}_{\tau} 
\end{align}

First, we bound the surrogate prediction error $\mathcal{E}_t(\hat{f})$ using Eq. (29):
\begin{align}
\label{opt-rhs}
    \sqrt{\mathcal{E}_t(\hat{f})} &= \sqrt{\sum_{\tau=1}^t (\hat{f}_\tau(x_\tau)-\bar{\hat{f}}_\tau(x_\tau))^2} = \sqrt{\sum_{\tau=1}^t (\tilde{f}_{t} + \Phi_{t}'(Q(t))\tilde{g}_{t}-\bar{f}_{t} - \Phi_{t}'(Q(t))\bar{g}_{t})^2}\nonumber \\
    &\leq \sqrt{\sum_{\tau=1}^t 2 (\tilde{f}_\tau(x_\tau)-\bar{f}_\tau(x_\tau))^2 + \sum_{\tau=1}^t 2\Phi'(Q(\tau))^2 (\tilde{g}_\tau(x_\tau)-\bar{g}_\tau(x_\tau))^2} \leq \sqrt{\sum_{\tau=1}^t 2 \epsilon_\tau(f) + \sum_{\tau=1}^t 2\Phi'(Q(\tau))^2 \epsilon_\tau(\tilde{g})}\nonumber \\ &\leq \sqrt{2\mathcal{E}_t(f)} + (\Phi_t(Q(t)) +1)\sqrt{\sum_{\tau=1}^t 2\lambda_\tau^2 \epsilon_\tau(\tilde{g})}
\end{align}
\begin{align*}
    5\sqrt{\frac{B}{\beta}}\sqrt{\mathcal{E}_{t}(\hat{f})} \le 5\sqrt{\frac{B}{\beta}} \sqrt{2\mathcal{E}_t(f)} + (\Phi_t(Q(t)) +1)5\sqrt{\frac{B}{\beta}}\sqrt{\sum_{\tau=1}^t 2\lambda_\tau^2 \epsilon_\tau(\tilde{g})}
\end{align*}
We choose $\lambda_\tau$ adaptively based on the prediction error of the constraints as defined before Theorem \ref{thm:opt-reg-vio}. Recall that,
\begin{align*}
    \lambda_{\tau} = \frac{1}{20(\sqrt{\frac{B}{\beta}}+\frac{B}{\beta})\sqrt{\gamma_\tau + 1} \sqrt{\log(\gamma_\tau + 1)+1} (\log(\log(\gamma_\tau+1)+1)+1)}
\end{align*}.
Note that, $\epsilon_\tau(\tilde{g}) \le D^{-2}$ (as pre-processed functions are $(2D)^{-1}$-Lipschitz), we can show that $\mathcal{E}_{\tau}(\tilde{g}) = \mathcal{E}_{\tau-1}(\tilde{g}) + \epsilon_\tau(\tilde{g}) \leq \mathcal{E}_{\tau-1}(\tilde{g}) + D^{-2} \leq \gamma_\tau.$
Thus:
\begin{align*}
    &\lambda_{\tau} \leq \frac{1}{20(\sqrt{\frac{B}{\beta}}+\frac{B}{\beta})\sqrt{\mathcal{E}_{\tau}(\tilde{g}) + 1} \sqrt{\log(\mathcal{E}_{\tau}(\tilde{g}) + 1)+1} (\log(\log(\mathcal{E}_{\tau}(\tilde{g})+1)+1)+1)} \\ \implies
    &\lambda_{\tau}^2 \leq \frac{1}{400(\sqrt{\frac{B}{\beta}}+\frac{B}{\beta})^2(\mathcal{E}_{\tau}(\tilde{g}) + 1) (\log(\mathcal{E}_{\tau}(\tilde{g}) + 1)+1) (\log(\log(\mathcal{E}_{\tau}(\tilde{g})+1)+1)+1)^2} \\ \implies
    &2\epsilon_\tau(\tilde{g})\lambda_{\tau}^2 \leq \frac{\epsilon_\tau(\tilde{g})}{200(\sqrt{\frac{B}{\beta}}+\frac{B}{\beta})^2(\mathcal{E}_{\tau}(\tilde{g}) + 1) (\log(\mathcal{E}_{\tau}(\tilde{g}) + 1)+1) (\log(\log(\mathcal{E}_{\tau}(\tilde{g})+1)+1)+1)^2} \\ \implies
    &2\epsilon_\tau(\tilde{g})\lambda_{\tau}^2 \leq \frac{(\mathcal{E}_{\tau}(\tilde{g}) + 1) - (\mathcal{E}_{\tau-1}(\tilde{g}) + 1)}{200(\sqrt{\frac{B}{\beta}}+\frac{B}{\beta})^2(\mathcal{E}_{\tau}(\tilde{g}) + 1) (\log(\mathcal{E}_{\tau}(\tilde{g}) + 1)+1) (\log(\log(\mathcal{E}_{\tau}(\tilde{g})+1)+1)+1)^2}
\end{align*}.
The above form makes it clear that $\sum_{\tau=1}^t 2\lambda_\tau^2 \epsilon_\tau(\tilde{g})$ is amenable to the integral-sum trick as in Lemma \ref{lemma:step-size-special}. Overall, we get, 
\begin{align*}
    \sum_{\tau=1}^t 2\lambda_\tau^2 \epsilon_\tau(\tilde{g}) \leq \frac{1}{100(\sqrt{\frac{B}{\beta}}+\frac{B}{\beta})^2} \implies 
    5\sqrt{\frac{B}{\beta}}\sqrt{\sum_{\tau=1}^t 2\lambda_\tau^2 \epsilon_\tau(\tilde{g})} \leq \frac{1}{2}
\end{align*}
Overall, we get that, 
\begin{align*}
\sqrt{\mathcal{E}_t(\hat{f})} \le 5\sqrt{\frac{B}{\beta}} \sqrt{2\mathcal{E}_t(f)}+ \frac{1}{2}\Phi_t(Q(t)) +\frac{1}{2}
\end{align*}
Then, we bound $\max_{1\leq \tau \leq t}\bar{L}_{\tau}$,
\begin{align*}
    \max_{1\leq\tau\leq t} \bar{L}_t \leq (2D)^{-1}\max_{1\leq\tau\leq t} (1 + \lambda_{\tau} e^{\lambda_\tau Q(\tau)}) \leq (2D)^{-1} + (2D)^{-1}\max_{1\leq\tau\leq t} \lambda_{\tau} e^{\lambda_\tau Q(\tau)} \leq (2D)^{-1} + (2D)^{-1} \lambda_{1} e^{\lambda_t Q(t)}
\end{align*}
Note that, $5\frac{B}{\beta}\lambda_1 \leq \frac{1}{4D^{-1}}$, so $(2D)^{-1}5\frac{B}{\beta} \lambda_{1} \leq \frac{1}{8}$.
Overall, we get,
\begin{align*}
    5\frac{B}{\beta}\max_{1\leq\tau\leq t} \bar{L}_t \leq 5\frac{B}{\beta}(2D)^{-1} + \frac{1}{8}(\Phi_t(Q(t))+1) 
\end{align*}
Substituting the upper-bound of $5\sqrt{\frac{B}{\beta}}\sqrt{\mathcal{E}_{t}(\hat{f})}$ and $5\frac{B}{\beta}\max_{1\leq \tau \leq t}\bar{L}_{\tau}$ in inequality \eqref{eq:opt-reg-decomp-1}: 
\begin{align*}
    \frac{3}{4}\Phi_{t+1}(Q(t+1))+\textrm{Regret}_t(x^*) \le 5\sqrt{\frac{B}{\beta}} \sqrt{2\mathcal{E}_t(f)}+ \frac{1}{2}\Phi_t(Q(t)) +\frac{1}{2} + 5\frac{B}{\beta}(2D)^{-1} + \frac{1}{8}(\Phi_t(Q(t))+1)
\end{align*}
Further, note that, $\Phi_t(Q(t)) \leq \Phi_{t+1}(Q(t+1))$,
\begin{align*}
    \frac{3}{4}\Phi_{t+1}(Q(t+1))+\textrm{Regret}_t(x^*) \le 5\sqrt{\frac{B}{\beta}} \sqrt{2\mathcal{E}_t(f)}+ \frac{5}{8} + 5\frac{B}{\beta}(2D)^{-1} + \frac{5}{8}\Phi_{t+1}(Q(t+1))
\end{align*}
\begin{align*}
    \frac{1}{8}\Phi_{t+1}(Q(t+1))+\textrm{Regret}_t(x^*) \le 5\sqrt{\frac{B}{\beta}} \sqrt{2\mathcal{E}_t(f)}+ \frac{5}{8} + 5\frac{B}{\beta}(2D)^{-1} 
\end{align*}
Dropping the non-negative $\Phi_{t}(Q(t))$ term gives the regret bound for the pre-processed functions:
\begin{align*}
    \textrm{Regret}_t(x^*) \le 5\sqrt{\frac{B}{\beta}} \sqrt{2\mathcal{E}_t(f)}+ \frac{5}{8} + 5\frac{B}{\beta}(2D)^{-1}
\end{align*}

For the CCV bound, use the trivial lower bound $\textrm{Regret}_t(x^*) \ge - Dt/ (2D) \cdot \alpha = -t / 2$. Substitute this into the inequality before dropping the $\Phi$ term:
\begin{align*}
\frac{1}{8}\Phi_{t+1}(Q(t+1)) \le 5\sqrt{\frac{B}{\beta}} \sqrt{2\mathcal{E}_t(f)}+ \frac{5}{8} + 5\frac{B}{\beta}(2D)^{-1} + \frac{t}{2}
\end{align*}
\begin{align*}
\Phi_{t+1}(Q(t+1)) \le 40\sqrt{\frac{B}{\beta}} \sqrt{2\mathcal{E}_t(f)}+ 10 + 20\frac{B}{\beta D} + 4t 
\end{align*}
Since $\Phi_{t}(x) = e^{\lambda_{t} x} - 1$:
\begin{align*}
e^{\lambda_{t+1} Q(t+1)} \le 40\sqrt{\frac{B}{\beta}} \sqrt{2\mathcal{E}_t(f)}+ 11 + 20\frac{B}{\beta D} + 4t 
\end{align*}
\begin{align*}
Q(t+1) \le \frac{1}{\lambda_{t+1}} \log(40\sqrt{\frac{B}{\beta}} \sqrt{2\mathcal{E}_t(f)}+ 11 + 20\frac{B}{\beta D} + 4t)
\end{align*}
\begin{align*}
    &Q(t+1) \\ \le & 20(\sqrt{\frac{B}{\beta}}+\frac{B}{\beta})\sqrt{\gamma_{t+1} + 1} \sqrt{\log(\gamma_{t+1} + 1)+1} (\log(\log(\gamma_{t+1}+1)+1)+1) \log(40\sqrt{\frac{B}{\beta}} \sqrt{2\mathcal{E}_t(f)}+ 11 + 20\frac{B}{\beta D} + 4t)
\end{align*}
where $\gamma_{t+1} = \mathcal{E}_t(\tilde{g})+D^{-2}$.

Overall, we get a bound of $O(\sqrt{\mathcal{E}_t(\tilde{f})})$ for regret and a bound of $\tilde{O}(\sqrt{\mathcal{E}_{t}(\tilde{g})})$ for the CCV.
\section{Numerical Simulations}
\label{expts}
\paragraph{Setup:} We consider a constrained version of the online shortest path problem \citep{hazan2022introduction}, where the length corresponds to the latency and the constraint is on the long-term cumulative bandwidth across the path. In this problem, on each round, the online algorithm first selects a route connecting a source $s$ to a destination $d$ on a graph $G(V,E)$. The latency and bandwidth of each edge vary across rounds, reflecting dynamic network conditions. The objective is to minimize the cumulative latency subject to a long-term lower bound on the cumulative bandwidth. 
Formally, 
the following sequence of events takes place on the $t$\textsuperscript{th} round: 
\begin{enumerate}
    \item The algorithm first chooses a route (randomly or otherwise) $p_t \in P_{s,d}$, where $P_{s,d}$ is the set of all $s-d$ routes in the graph. 
    \item  A latency of $\tau_e(t)$ and a bandwidth of $l_e(t)$ is chosen by an adversary for each edge $e \in E$. 
    \item The algorithm incurs a latency cost of $\sum_{e \in p_t} \tau_e(t)$ and a bandwidth cost of $-\sum_{e \in p_t} l_e(t)$ on round $t.$
\end{enumerate}

We represent each route $p$ by its corresponding $|E|$-dimensional binary incidence vector where $p_e=1$ if the edge $e \in E$ belongs to the route or $p_e=0$ otherwise. On each round, our online policy returns an element from the convex hull of $P_{s,d}$, also known as the unit flow polytope \citep{hazan2022introduction}. We use Dijkstra's algorithm for computing the weighted shortest route. It is well-known that any element in the unit flow polytope can be efficiently decomposed into a convex combination of at most $|E|$ number of $s-d$ routes using the flow decomposition lemma \citep[Lemma 2.20]{williamson2019network}. These convex combinations can be used to randomly select a single route on each round, incurring the same expected cost. The experiments are performed on a quad-core CPU with 8 GB RAM. The experimental setup mostly follows that in \citet{sarkar2025projectionfreealgorithmsonlineconvex}.

\paragraph{Dataset:}
For the experiments, we utilize a semi-synthetic dataset designed to simulate dynamic network conditions, constructed following methodologies established in prior work \citep{sarkar2025projectionfreealgorithmsonlineconvex}  for evaluating online network algorithms. The dataset originates from raw data collected via public probes on the RIPE Atlas global network measurement platform \citep{staff2015ripe}, specifically HTTPS measurements capturing bandwidth and latency between network nodes.The resulting graph consists of $n=191$ nodes and $m=1200$ edges, where each edge is characterized by latency and bandwidth attributes. The construction involves centering the graph around a hub node, adding connections based on the RIPE Atlas data, and then introducing additional random edges, with edge weights determined by latencies relative to the hub and minimum bandwidths between connected nodes.To mimic real-world fluctuations for the online setting, temporal variations are introduced through random scaling factors applied at each iteration: latency values are scaled randomly between 0.5x and 1.5x, and bandwidth values between 0.8x and 1.2x. This process generates the time-varying latency and bandwidth matrices used for evaluating algorithm performance over a horizon length of $T=1600$ iterations. This dataset provides a realistic and challenging testbed for constrained online optimization algorithms.

\begin{figure}[ht]
    \centering
    \begin{minipage}[b]{0.48\textwidth}
        \centering
        \includegraphics[width=\linewidth]{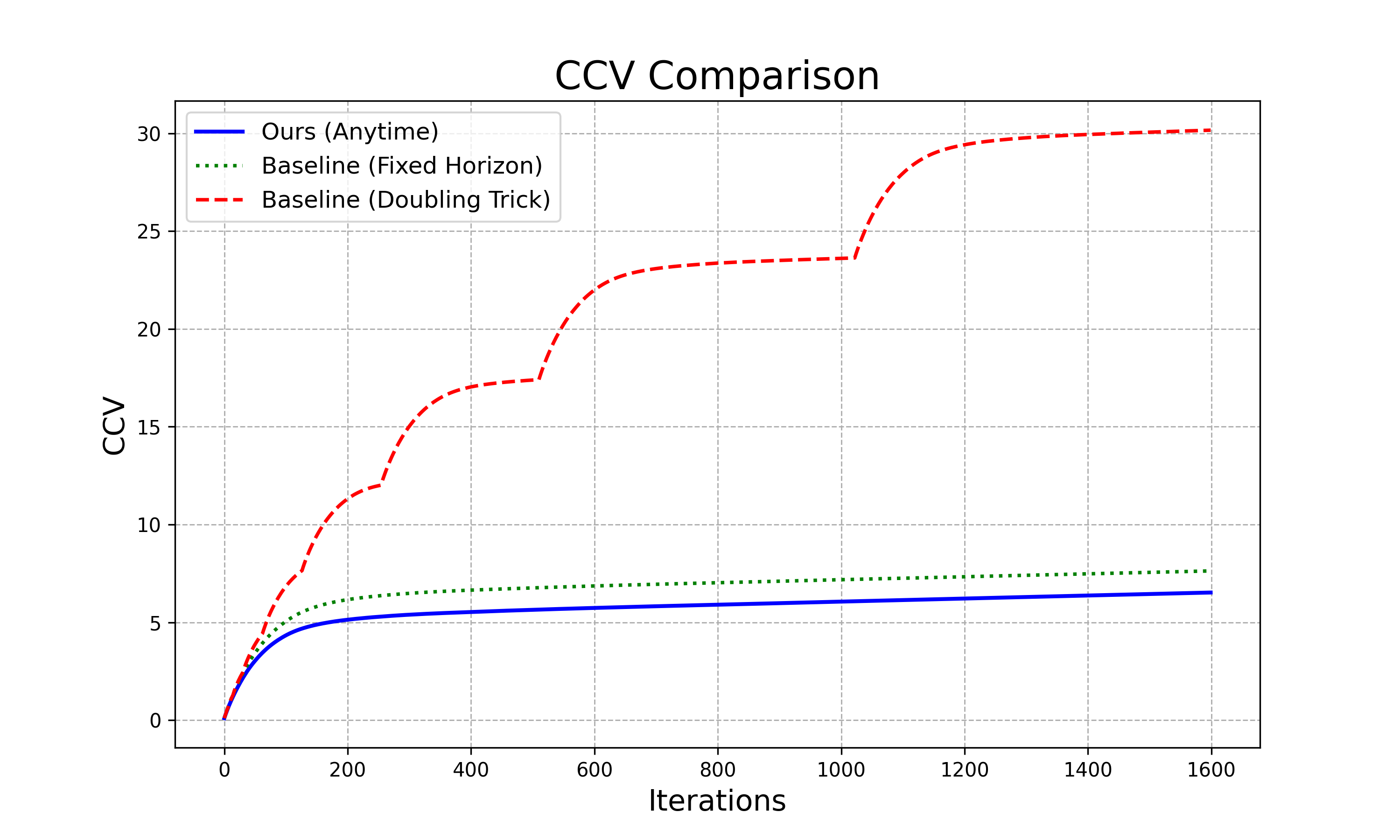}
        \caption{CCV comparison between our policy, Algorithm 1 of \citet{sinha2024optimal} and its doubling trick implementation.}
        \label{fig:ccv_comparison}
    \end{minipage}
    \hfill
    \begin{minipage}[b]{0.48\textwidth}
        \centering
        \includegraphics[width=\linewidth]{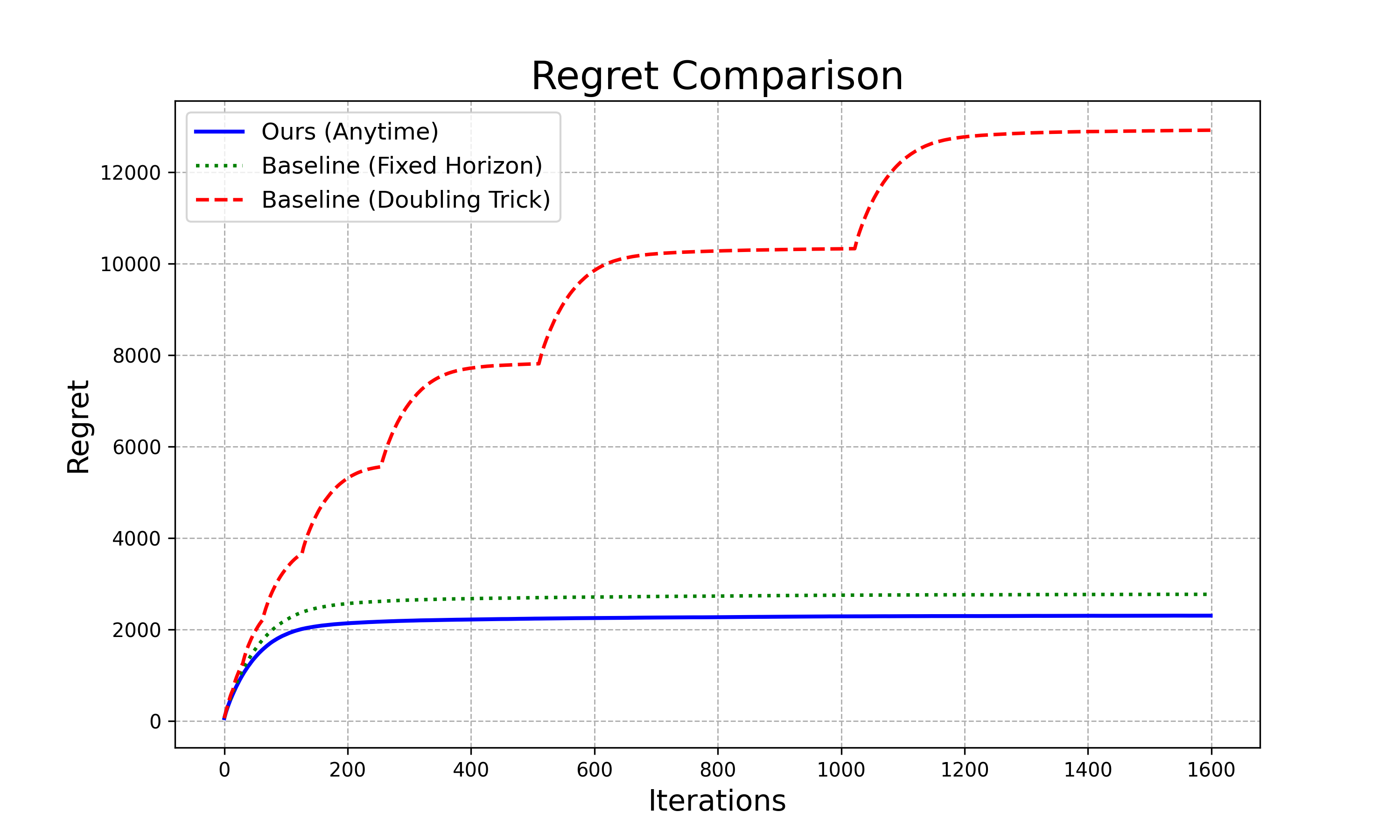}
        \caption{Regret comparison between our policy, Algorithm 1 of \citet{sinha2024optimal} and its doubling trick implementation..}
        \label{fig:regret_comparison}
    \end{minipage}
    \label{fig:combined}
\end{figure}


\paragraph{Results:} The empirical results presented in Figure \ref{fig:ccv_comparison} and Figure \ref{fig:regret_comparison} provide strong validation for our theoretical contributions. The plots compare the performance of our proposed anytime algorithm against two key baselines: the standard fixed-horizon algorithm from Sinha and Vaze (2024), which assumes prior knowledge of the total horizon $T$, and its adaptation to an unknown horizon setting via the standard doubling trick.

In summary, the simulations reveal a key practical advantage of our approach, demonstrating that our anytime algorithm outperforms both the fixed-horizon and doubling trick baselines. 

The doubling trick based method performs the worst. This is due to the restarting technique that they employ where an the algorithm has to discard the information it has upto that point and start from scratch. Our observations match those in \cite{besson2018doubling} where they evaluated several versions of the doubling trick and concluded that performance is substantially worse than that of an intrinsically anytime algorithm, even one with less favorable theoretical guarantees. Moreover, this performance gap becomes larger each time the algorithm is restarted.

While the fixed-horizon method is optimal in a worst-case theoretical sense, its use of a single, conservative trade-off parameter hinders its practical performance. Our algorithm, with its time-varying parameter, is more adaptive; it aggressively penalizes violations in the early stages, quickly learning the constraints and achieving a superior performance trajectory. This confirms that our method is not just a tool for handling an unknown horizon, but a fundamentally more effective strategy that also avoids the instability and performance loss inherent in the doubling trick.
\end{document}